\pdfoutput=1

\documentclass[11pt]{article}

\usepackage[final]{acl}

\usepackage{times}
\usepackage{latexsym}

\usepackage[T1]{fontenc}

\usepackage[utf8]{inputenc}

\usepackage{microtype}

\usepackage{inconsolata}

\usepackage{graphicx}

\usepackage{amsmath,amssymb,amsfonts}
\usepackage{amsthm}

\newtheorem{theorem}{Theorem}
\newtheorem{assumption}{Assumption}
\newtheorem{lemma}{Lemma}

\DeclareMathOperator*{\argmin}{arg\,min}

\usepackage{booktabs}
\usepackage{multicol}
\usepackage{multirow}
\usepackage{color}
\usepackage{xcolor}
\usepackage{subfigure}

\usepackage{algorithm}
\usepackage{algorithmic}

\usepackage{array} 

\usepackage{tabularx} 
\usepackage{makecell}

\title{

FedProxy: Federated Fine-Tuning of LLMs via  Proxy SLMs and Heterogeneity-Aware Fusion
}

\author{
 \textbf{Tao Fan\textsuperscript{1, 2}},
 \textbf{Guoqiang Ma\textsuperscript{2}},
 \textbf{Yuanfeng Song\textsuperscript{2}},
 \textbf{Lixin Fan\textsuperscript{2}},
  \textbf{Kai Chen\textsuperscript{1}},
 \textbf{Qiang Yang\textsuperscript{3}}
\\
 \textsuperscript{1}Hong Kong University of Science and Technology, Hong Kong, China
 \\
 \textsuperscript{2}WeBank Co., Ltd, Shenzhen, China
 \\
 \textsuperscript{3}Hong Kong Polytechnic University, Hong Kong, China
\\
 \small{
   \textbf{Correspondence:} \href{mailto:tfanac@cse.ust.hk}{tfanac@cse.ust.hk}, \href{mailto:qyang@cse.ust.hk}{qyang@cse.ust.hk}
 }
}

\begin{document}
\maketitle

\begin{abstract}
Federated fine-tuning of Large Language Models (LLMs) is obstructed by a trilemma of challenges: protecting LLMs intellectual property (IP), ensuring client privacy, and mitigating performance loss on heterogeneous data. Existing methods like Offsite-Tuning (OT) secure the LLMs IP by having clients train only lightweight adapters, yet our analysis reveals they suffer from a fundamental performance bottleneck, leaving a significant gap compared to centralized training. To bridge this gap, we introduce FedProxy, a new federated adaptation framework. FedProxy replaces weak adapters with a unified, powerful Proxy Small Language Model (SLM), compressed from the proprietary LLM, to serve as a high-fidelity surrogate for collaborative fine-tuning. Our framework systematically resolves the trilemma through a three-stage architecture: (i) Efficient Representation via server-guided compression to create a resource-friendly proxy; (ii) Robust Optimization through an interference-mitigating aggregation strategy to handle data heterogeneity; and (iii) Effortless Fusion via a training-free "plug-in" mechanism to integrate learned knowledge back into the LLM. Experiments show FedProxy significantly outperforms OT methods and approaches centralized performance, establishing a new benchmark for secure and high-performance federated LLM adaptation.
\end{abstract}

\section{Introduction}

Large Language Models (LLMs)~\cite{OpenAI2023GPT4TR, touvron2023llama2,guo2025deepseek,yang2025qwen3} have revolutionized natural language understanding and generation. However, fine-tuning LLMs on decentralized, privacy sensitive data via Federated Learning (FL)~\cite{mcmahan2017communication,yang2019federated} faces a critical trilemma: (i) preserving proprietary model intellectual property (IP), (ii) safeguarding client data privacy, and (iii) mitigating performance degradation from heterogeneous data distributions~\cite{kang2023grounding, fan2023fate-llm,fan2025fedmkt,fan2025ten}.
Offsite Tuning (OT)~\cite{xiao2023offsite} and its variants~\cite{zhang2023crash, yao2025scaleot, yao2025gradot, kuang2024federatedscope,wu2024fedbiot} emerged to address this by training only lightweight adapters on the client side, keeping the backbone frozen and obfuscated. Despite privacy benefits, our analysis reveals a significant performance bottleneck in OT based frameworks due to inherent representation capacity limitations, hindering practical utility.

To bridge this gap, we introduce \textit{\textbf{FedProxy}}, a holistic framework for secure federated LLM adaptation. FedProxy adopts a surrogate driven optimization paradigm. By distilling the proprietary LLM into a \textit{\textbf{Proxy Small Language Model (SLM)}}, we create a high fidelity semantic vessel retaining backbone structural knowledge, computationally feasible for client-side training.
As illustrated in Figure \ref{fig:fedproxy_overview}, FedProxy addresses the trilemma through a three stage architectural synthesis:

\begin{itemize}
    \item \textbf{Server Guided Compression.} The server compresses the proprietary LLM into a proxy SLM using public data. This protects model IP and accommodates client side resource constraints.
    
    \item \textbf{Interference Mitigating Aggregation.} To address data heterogeneity, we propose a multi-stage aggregation protocol analyzing client heterogeneity and parameter conflicts. This guides conflict-aware local training and heterogeneity-aware server-side merging.
    
    \item \textbf{Training Free Knowledge Fusion.}
    We design a parameter-space "plug in" mechanism directly integrating refined proxy weights into the original LLM. This enables knowledge transfer without costly retraining or additional inference latency.

\end{itemize}

Our key contributions are as follows:

\begin{itemize}

    \item We propose \textit{\textbf{FedProxy}}, a holistic framework resolving the trilemma of IP protection, data privacy, and model performance. It orchestrates a three stage pipeline for effective federated fine-tuning via a proxy SLM.
    
    \item We propose \textbf{\textit{H-TIES}} and \textbf{\textit{PCR}}, a heterogeneity-aware aggregation strategy. By analyzing parameter-level interference, this strategy guides conflict-aware local training and server-side merging, which effectively alleviates parameter interference issues in heterogeneous tasks.
    
    \item Extensive experiments demonstrate FedProxy significantly outperforms OT-based methods and achieves performance comparable to centralized fine-tuning.
\end{itemize}

\begin{figure*}[ht]
 \centering

  \includegraphics[width=0.95\linewidth]{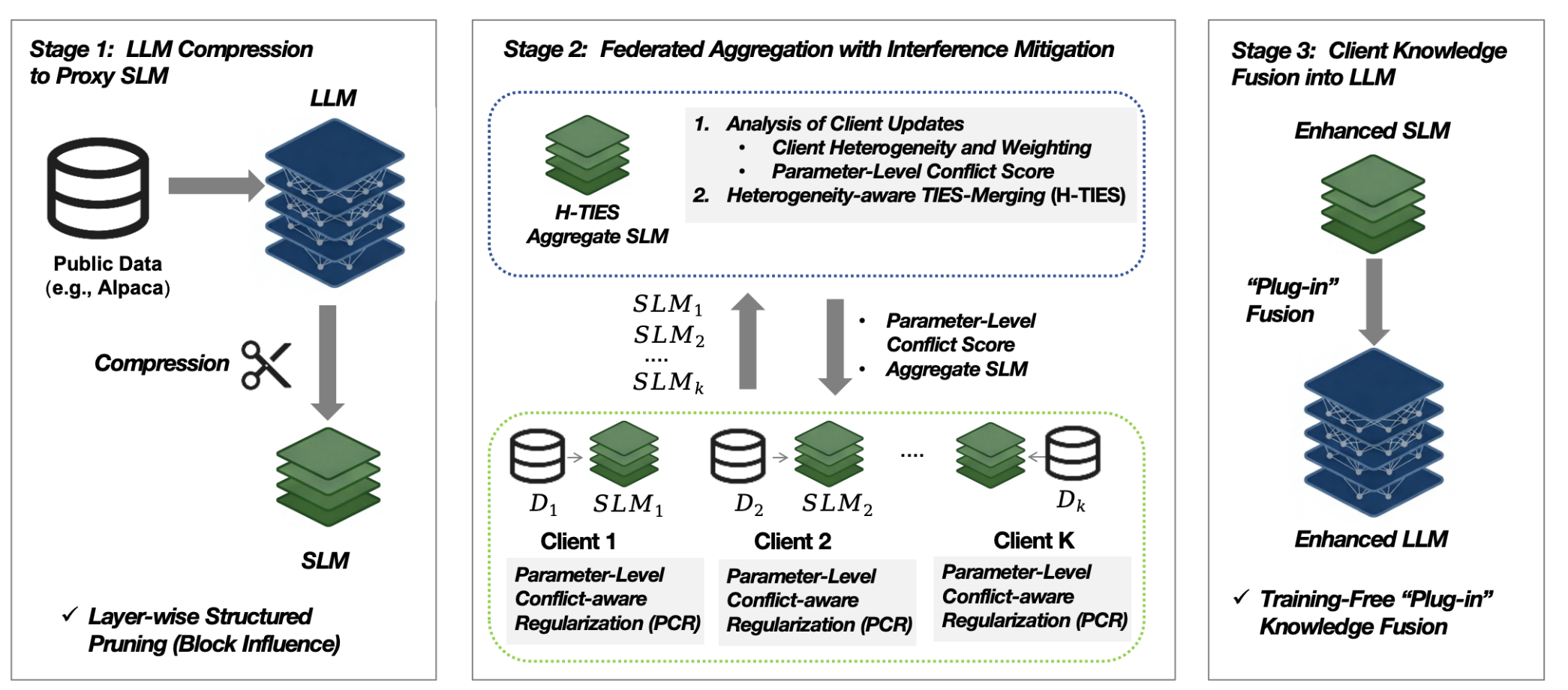}
  
   \caption{Overview of the FedProxy framework. The framework consists of three main stages: (1) LLM Compression to Proxy SLM, (2) Client-side SLM training and federated aggregation with interference mitigation, and (3) Client knowledge fusion back into the global LLM.}
\label{fig:fedproxy_overview}

\end{figure*}

\section{Related Work}

\subsection{Offsite-Tuning}

To address privacy-preserving fine-tuning in client-server architectures where direct exposure of the server's proprietary LLM is undesirable, \textit{Offsite-Tuning} (OT) ~\cite{xiao2023offsite} has been introduced, where the server sends a lightweight, frozen LLM \textit{emulator} and trainable adapter modules to the client, who fine-tunes only the adapters and returns them for integration into the full LLM, preserving both client data privacy and server model IP. Subsequent works have aimed to enhance the OT framework. For instance, CRaSh~\cite{zhang2023crash} improves emulator generation through layer clustering and dropping, while ScaleOT~\cite{yao2025scaleot} and GradOT~\cite{yao2025gradot} introduce compression techniques based on reinforcement learning and gradient preservation. has also been extended to federated learning~\cite{yang2019federated,fan2025ten} with FedOT~\cite{kuang2024federatedscope} and FedBiOT~\cite{wu2024fedbiot}. 
However, our empirical analysis (Section~\ref{sec:experiments}) reveals that OT-based methods consistently fall short of centralized fine-tuning performance.

\subsection{Model Merging}
Model merging seeks to combine multiple fine-tuned models into a single, powerful one without retraining. While naive weight averaging \cite{wortsman2022model} often fails, advanced techniques address task conflicts. Some methods, like Fisher-Merging \cite{matena2022merging} and RegMean \cite{jin2023dataless}, compute task-aware coefficients. Others operate on \textit{task vectors} (the difference between fine-tuned and pretrained weights) \cite{ilharco2023editing}. More sophisticated approaches, such as TIES-Merging \cite{yadav2023ties} and DARE \cite{yu2024language}, resolve interference between these vectors by trimming redundant parameters and rescaling weights, leading to more robust merged models.

\section{Problem Formulation}
\label{sec:problem}
The goal of federated LLM fine-tuning is to enhance a proprietary, server-held LLM $f_\theta$ by leveraging private data $\{\mathcal{D}_k\}_{k=1}^K$ from $K$ clients. This task is fundamentally constrained by the need to protect the LLM's intellectual property (IP), accommodate clients' limited resources, ensure data privacy, and handle statistical data heterogeneity. We decompose this into three core sub-problems:

\textbf{1. Efficient and Secure Model Representation.}
To protect model IP and accommodate client constraints, transmitting the full LLM $f_\theta$ is infeasible. The first problem is to create a compact proxy model $f_\phi$ ($|\phi| \ll |\theta|$) that is powerful enough for fine-tuning yet small enough for efficient on-device training and communication. The objective is to find a compression function $g$ to generate the initial proxy $\phi^{(0)}$:
\begin{equation}
\label{eq:prob_compress}
\phi^{(0)} = g(\theta, \mathcal{D}_{\text{pub}})
\end{equation}

\textbf{2. Heterogeneity-Aware Federated Optimization.}
Due to diverse local data $\mathcal{D}_k$, aggregating client-trained models $\{\phi_k^{(t)}\}$ can lead to "negative interference", where conflicting updates degrade performance. The second problem is to devise a robust federated optimization strategy $\mathcal{A}$ that mitigates this interference. The goal is to produce an improved global proxy model $\phi^{(t+1)}$ at each round:
\begin{equation}
\label{eq:prob_agg}
\phi^{(t+1)} = \mathcal{A}(\{\phi_k^{(t)}\}_{k=1}^K, \phi^{(t)})
\end{equation}

\textbf{3. Client Knowledge Fusion into the LLM.}
Finally, the client knowledge aggregated in the converged proxy model $\phi^*$ must be transferred back to enhance the original LLM $f_\theta$. The third problem is to design a fusion mechanism $\mathcal{F}$ that integrates this specialized knowledge, improving the LLM without direct access to client data:
\begin{equation}
\label{eq:prob_fuse}
\theta_{\text{new}} = \mathcal{F}(\theta_{\text{old}}, \phi^*)
\end{equation}

\section{The Proposed FedProxy Framework}
To address the sub-problems defined in Section~\ref{sec:problem}, we introduce \textbf{FedProxy}, a three-stage framework that provides a concrete solution to the problem decomposition. We illustrate the FedProxy architecture in Figure \ref{fig:fedproxy_overview}, with further details provided in Algorithm \ref{alg:fedproxy}. The first stage, \textbf{Efficient Model Representation via Compression}, is detailed in Section~\ref{ss:llm_compression}. The second stage, \textbf{Federated Aggregation with Interference Mitigation}, is covered in Section~\ref{ss:parameter-interference-mitigation}. Finally, the \textbf{Training-Free "Plug-in" Knowledge Fusion} is explained in Section~\ref{ss:knowledge_fusion}. For theoretical analysis, see Appendix~\ref{sec:appendix-theoretical-analysis}.

\subsection{LLM Compression to Proxy SLM}
\label{ss:llm_compression}

The first phase of the FedProxy framework is to compress a large, general-purpose LLM, denoted as $f_\theta$, into a single, efficient proxy SLM, $f_\phi$. This proxy SLM serves as a shared, high-quality foundation for all clients, significantly reducing the computational and communication overhead required for federated fine-tuning.
Instead of relying on client-specific data, which can be sensitive and heterogeneous, we perform this compression using a broad, publicly available instruction-following dataset, $\mathcal{D}_{\text{pub}}$ (e.g., Alpaca~\cite{alpaca}). This approach ensures that the resulting SLM retains a strong, task-agnostic reasoning and instruction-following capability, making it an effective starting point for diverse downstream tasks.

The compression is achieved through structured pruning, where we identify and remove entire transformer blocks from the original LLM. We quantify the importance of each block using the Block Influence (BI) metric, adapted from ShortGPT~\cite{men2024shortgpt}. The BI score for the $i$-th transformer block is calculated based on its impact on hidden state representations when processing the public dataset:
\begin{equation}
\label{eq:bi_metric}
\text{BI}_i = 1 - \mathbb{E}_{X \sim \mathcal{D}_{\text{pub}}, t} \left[ \frac{X_{i,t}^{\top} X_{i+1,t}}{\|X_{i,t}\|_2 \|X_{i+1,t}\|_2} \right]
\end{equation}
where $X_{i,t}$ denotes the representation of the $t$-th token at the output of block $i$. A higher BI score indicates that a block contributes more significantly to the model's representational capacity.

Based on these scores, we generate a global pruning mask by selecting the blocks with the highest BI scores until the desired compression ratio $\kappa$ is met. This process yields a single, compact \textbf{\textit{proxy SLM}} architecture, $f_\phi$, where $|\phi| \ll |\theta|$. This unified Proxy SLM is then used as the initial model for all clients in the subsequent federated fine-tuning stage (Section~\ref{ss:parameter-interference-mitigation}).

\subsection{Federated Aggregation with Interference Mitigation}
\label{ss:parameter-interference-mitigation}
Parameter interference represents a critical challenge in federated model merging, particularly when aggregating models trained on heterogeneous tasks. As demonstrated in prior research \cite{yu2024language, yadav2023ties}, unweighted averaging can significantly degrade model performance through two primary interference mechanisms: magnitude interference, where task-critical parameters are diluted, and directional interference, where parameters with opposing signs cancel each other out.

To address this, FedProxy implements a three-stage aggregation strategy in each communication round $t$. First, the server analyzes all incoming client updates to compute metrics for heterogeneity, weighting, and parameter-level conflict. Second, clients use these metrics for a conflict-aware regularization during their local training. Finally, the server uses the metrics to perform a heterogeneity-aware model merge. This process is detailed below.

\subsubsection{Server-Side Analysis of Client Updates}
At the end of a training round $t$, after receiving updated models $\phi_k^{(t)}$ from all clients, the server performs a comprehensive analysis. This step generates a set of metrics that will guide both the next round of client training and the current round's aggregation.

\textbf{Task Vector and Similarity Calculation}. The server first computes each client's task vector, $\tau_k^{(t)} = \phi_k^{(t)} - \phi^{(t-1)}$, representing the update from the previous global model. It then calculates pairwise cosine similarity scores $S_{k,j}^{(t)} = \cos(\tau_k^{(t)}, \tau_j^{(t)})$ between all client pairs.

\textbf{Heterogeneity and Weighting Metrics}. Based on the similarity scores $S_{k,j}^{(t)}$, the server derives two key metrics by evaluating the relationship between client $k$ and its peers $j \in \{1, \dots, K\} \setminus \{k\}$:

\begin{itemize}
   \item A \textbf{heterogeneity coefficient} $h_k^{(t)}$, which measures how much a client deviates from the rest of the population. By averaging similarities with all other clients, it identifies unique or outlier task updates.
   \begin{equation}
   \label{eq:heterogeneity}
   h_k^{(t)} = 1 - \frac{1}{K-1} \sum_{j=1, j \neq k}^{K} \max\big(0, S_{k,j}^{(t)}\big)
   \end{equation}
   
   \item An \textbf{adaptive aggregation weight} $w_k^{(t)}$, which prioritizes clients that exhibit strong consensus with the broader network.
   \begin{equation}
   \label{eq:weight}
   w_k^{(t)} = \frac{\exp\left( \sum_{j=1, j \neq k}^{K} |S_{k,j}^{(t)}| \right)}{\sum_{k'=1}^K \exp\left( \sum_{j=1, j \neq k'}^{K} |S_{k',j}^{(t)}| \right)}
   \end{equation}
\end{itemize}
The heterogeneity coefficient is also normalized via $h_{k,\text{norm}}^{(t)} = (h_k^{(t)} - h_{\text{min}}) / (h_{\text{max}} - h_{\text{min}})$. These two metrics, $h_{k,\text{norm}}^{(t)}$ and $w_k^{(t)}$, are used in the server-side H-TIES merging step.

\textbf{Parameter-Level Conflict Score}. Finally, the server calculates a conflict score $C_d^{(t)}$ for each parameter dimension $d$, quantifying the disagreement in update signs across all clients.
 \begin{equation}
 \label{eq:param_conflict}
 C_d^{(t)} = 1 - \frac{1}{K} \left| \sum_{k=1}^K \text{sign}(\tau_k^{(t)}[d]) \right|
 \end{equation}
A score near 1 indicates high conflict (updates pull in different directions), while a score near 0 signifies strong agreement. This conflict score is sent to clients to be used in the next training round.

\subsubsection{Parameter-Level Conflict-Aware Client-Side Regularization (PCR)}
To proactively mitigate interference, clients incorporate the server-provided conflict scores into their local training objective for the next round, $t+1$. The composite loss function $\mathcal{L}_k$ for client $k$ combines the standard task loss with a dynamic regularization term:
\begin{equation}
\label{eq:client_train}
\mathcal{L}_k = \mathcal{L}_{\text{task}}(\phi_k; \mathcal{D}_k) + \lambda_{\text{reg}} \cdot \mathcal{L}_{\text{reg},k}
\end{equation}
where $\phi_k$ is the model being trained, and the regularization term $\mathcal{L}_{\text{reg},k}$ penalizes divergence from the  previous global model $\phi^{(t)}$. The penalty is scaled by the parameter-level conflict scores $C_d^{(t)}$ from the previous round:
\begin{equation}
\label{eq:client_reg_2}
\mathcal{L}_{\text{reg},k} = \sum_d  C_d^{(t)} \cdot \|\phi_k[d] - \phi^{(t)}[d]\|_2^2
\end{equation}
This mechanism encourages clients to align with the global model on consensus parameters (where $C_d^{(t)}$ is high) while allowing more freedom for personalization on conflicting parameters (where $C_d^{(t)}$ is low).

\subsubsection{Heterogeneity-aware TIES-merging (H-TIES)}
To effectively merge client updates, the server employs the \textbf{Heterogeneity-aware TIES-merging (H-TIES)} strategy. This approach adapts the aggregation process by dynamically using the heterogeneity coefficient $h_k^{(t)}$ and aggregation weight $w_k^{(t)}$. It consists of three steps:

\textbf{1. Heterogeneity-Adaptive Sparsification.}
H-TIES adjusts the sparsity of each client's update based on its heterogeneity. The retention rate $r_k^{(t)}$ is inversely proportional to the heterogeneity coefficient $h_{k,\text{norm}}^{(t)}$. Let $r_0$ be the base retention rate and $\delta$ be the adaptive strength:
\begin{equation}
r_k^{(t)} = \max\big(0, \min(1, r_0 - \delta \cdot h_{k,\text{norm}}^{(t)})\big)
\end{equation}
Clients with high heterogeneity (large $h_{k,\text{norm}}^{(t)}$) will have their updates made sparser, while homogeneous clients (small $h_{k,\text{norm}}^{(t)}$) retain a larger portion. This yields the \textbf{\textit{sparsified update}} $\tilde{\tau}_k^{(t)}$.

\textbf{2. Pre-Aggregation Scaling.}
Each client's influence is then scaled by its aggregation weight $w_k^{(t)}$ to amplify the contributions of clients within a strong consensus:
\begin{equation}
\hat{\tau}_k^{(t)} = w_k^{(t)} \tilde{\tau}_k^{(t)}
\end{equation}

\textbf{3. Weighted Conflict Resolution and Merging.}
Finally, the server resolves conflicts and merges the updates using a method inspired by the principles of TIES-merging~\cite{yadav2023ties}. This is a pure merging step without any server-side learning. For each parameter dimension $d$, it first computes the total weighted magnitude of positive ($P_d$) and negative ($N_d$) updates:
\begin{equation}
P_d = \sum_{k: \hat{\tau}_k^{(t)}[d] > 0} |\hat{\tau}_k^{(t)}[d]|, \quad
N_d = \sum_{k: \hat{\tau}_k^{(t)}[d] < 0} |\hat{\tau}_k^{(t)}[d]|
\end{equation}
An update $\Delta \phi^{(t)}[d]$ is then computed by performing a weighted average of only the updates that conform to a dominant sign. A sign is dominant if its total magnitude sufficiently outweighs the other, controlled by a threshold $\rho \ge 1$:
\begin{equation}
\Delta \phi^{(t)}[d] = 
\begin{cases}
\; \frac{\sum_{k: \hat{\tau}_k^{(t)}[d] > 0} \hat{\tau}_k^{(t)}[d]}{\sum_{k: \hat{\tau}_k^{(t)}[d] > 0} w_k}, & \text{if } \frac{P_d}{N_d + \varepsilon} \ge \rho \\
\; \frac{\sum_{k: \hat{\tau}_k^{(t)}[d] < 0} \hat{\tau}_k^{(t)}[d]}{\sum_{k: \hat{\tau}_k^{(t)}[d] < 0} w_k}, & \text{if } \frac{N_d}{P_d + \varepsilon} \ge \rho \\
\; 0, & \text{otherwise}
\end{cases}
\end{equation}
where $\varepsilon$ is a stability constant. This step performs a weighted average of the conforming updates, preventing interference from conflicting signs.

The global \textbf{\textit{proxy SLM }}is then updated by directly applying this aggregated delta:
\begin{equation}
\phi^{(t)} = \phi^{(t-1)} + \Delta \phi^{(t)}
\end{equation}

\subsection{Training-Free "Plug-in" Knowledge Fusion}

\label{ss:knowledge_fusion}

In the final stage, we fuse the knowledge from the aggregated proxy SLM, $f_\phi$, back into the original LLM, $f_\theta$. Our approach is simple, effective, and entirely training-free. Leveraging the architectural mapping where the proxy SLM is a direct sub-network of the LLM (i.e., $\phi \subset \theta$), we perform a direct parameter replacement. Specifically, we take the updated layers from the global proxy SLM and use them to overwrite the corresponding layers in the full LLM. This "plug-in" mechanism seamlessly integrates the federated enhancements without requiring any costly retraining or additional inference latency on the massive LLM, thus efficiently updating its capabilities with the knowledge gained from clients.

\begin{algorithm}[h] 
\footnotesize
\caption{The FedProxy Framework}
\label{alg:fedproxy}

\textbf{Input:} \\
    Proprietary LLM $f_\theta$; Public dataset $\mathcal{D}_{\text{pub}}$; \\
    Number of clients $K$; Communication rounds $T$; \\
    Client datasets $\{\mathcal{D}_k\}_{k=1}^K$; Compression ratio $\kappa$;

\textbf{Output:} Enhanced global LLM $\theta_{\text{final}}$

\begin{algorithmic}[1]
  \STATE \textbf{Server-Side Initialization:}
  \STATE Generate proxy SLM: $\phi^{(0)} \gets \text{Compress}(\theta, \mathcal{D}_{\text{pub}}, \kappa)$ (Sec~\ref{ss:llm_compression})
  \STATE Initialize parameter conflict scores $C_d^{(0)} \gets \mathbf{0}$ for all dimensions $d$

  \FOR{each round $t = 0$ \TO $T-1$}
    \STATE \textbf{Server-Side (Distribution):}
    \STATE Distribute global proxy $\phi^{(t)}$ and conflict scores $\{C_d^{(t)}\}$ to all clients.

    \STATE \textbf{Client-Side Update (Parallel):}
    \FOR{each client $k \in [K]$}
      \STATE Receive $\phi^{(t)}$ and $\{C_d^{(t)}\}$.
      \STATE Train $\phi_k$ on $\mathcal{D}_k$ to minimize $\mathcal{L}_k$ (Eq.~\ref{eq:client_train}) $\rightarrow$ get $\phi_k^{(t+1)}$.
      \STATE Send updated proxy $\phi_k^{(t+1)}$ to server.
    \ENDFOR

    \STATE \textbf{Server-Side (Analysis and Aggregation):}
    \STATE Receive $\{\phi_k^{(t+1)}\}_{k=1}^K$ from clients.
    \STATE \textit{// Server-Side Analysis of Client Updates}
    \STATE Compute task vectors $\tau_k^{(t+1)} \gets \phi_k^{(t+1)} - \phi^{(t)}$.
    \STATE Compute heterogeneity $\{h_k^{(t+1)}\}$ (Eq.~\ref{eq:heterogeneity}) and weights $\{w_k^{(t+1)}\}$ (Eq.~\ref{eq:weight}).
    \STATE Compute parameter conflict scores $\{C_d^{(t+1)}\}$ for the next round (Eq.~\ref{eq:param_conflict}).
    
    \STATE \textit{// Heterogeneity-Aware Aggregation (H-TIES)}
    \STATE Aggregate task vectors $\{\tau_k^{(t+1)}\}$ using H-TIES merging (Sec~\ref{ss:parameter-interference-mitigation}) to get global update $\Delta\phi^{(t+1)}$.
    \STATE Update global proxy: $\phi^{(t+1)} \gets \phi^{(t)} + \Delta\phi^{(t+1)}$.
  \ENDFOR
  
  \STATE \textbf{Final Knowledge Fusion:}
  \STATE Fuse converged proxy $\phi^{(T)}$ into base LLM $\theta$ (Sec~\ref{ss:knowledge_fusion}).
  \STATE $\theta_{\text{final}} \gets \mathcal{F}(\theta, \phi^{(T)})$ (Eq.~\ref{eq:prob_fuse})
  \STATE \textbf{Return} $\theta_{\text{final}}$
\end{algorithmic}
\end{algorithm}

\section{Experiments}
\label{sec:experiments}

\subsection{Experimental Setup}
\label{sec:exp_setup}
We evaluate our framework using \textbf{LLaMA2-7B}~\cite{touvron2023llama2} and \textbf{Mistral-7B-Instruct-v0.2}~\cite{jiang2023mistral7b} as the foundational LLMs. The experiments are designed around two key scenarios to simulate different real-world conditions.

\textbf{Datasets.} Our experiments comprise two phases with distinct data sources. In the initial LLM compression stage, we compress the base LLM into a proxy SLM using a publicly available instruction-following dataset \textbf{Alpaca}~\cite{alpaca}. In the subsequent federated fine-tuning stage, we train and evaluate across eight datasets from two widely adopted benchmarks. For question answering (QA), we use \textbf{OBQA}~\cite{mihaylov2018can}, \textbf{ARC-Challenge (ARC-C)}~\cite{clark2018think}, \textbf{ARC-Easy (ARC-E)}~\cite{clark2018think}, and \textbf{CommonsenseQA (CQA)}~\cite{talmor2019commonsenseqa}. For general language understanding, we use four tasks from the GLUE benchmark~\cite{wang2018glue}: \textbf{SST2}~\cite{socher-etal-2013-parsing}, \textbf{MRPC}~\cite{dolan2005automatically}, \textbf{RTE}~\cite{giampiccolo2007third}, and \textbf{MNLI}~\cite{williams2018broad}.

\textbf{Task Scenarios.} We consider two scenarios: (1) \textbf{Homogeneous Tasks} with a 4-client setup where each client is assigned a different partition of the same dataset (IID distribution), and (2) \textbf{Heterogeneous Tasks} with an 8-client setup where each client is assigned a unique and different dataset. All tasks are evaluated based on accuracy using the \textit{lm-evaluation-harness} package~\cite{eval-harness}. Detailed configurations are provided in Table \ref{tab:task_configuration}.

\textbf{Baselines.} We compare \textbf{FedProxy} against five baselines: (1) \textbf{ZeroShot}, the LLM's performance without any fine-tuning, serving as a lower bound; (2) \textbf{CentSFT}, the LLM fine-tuned on the data centrally, representing the performance upper bound; (3) \textbf{OT}~\cite{xiao2023offsite}, standard offsite-tuning with the first two and the last two decoders as the adapter; (4) \textbf{FedOT}~\cite{kuang2024federatedscope}, federated offsite-tuning with the first two and the last two decoders as the adapter; and (5) \textbf{FedBiOT}~\cite{wu2024fedbiot}, federated offsite-tuning with bi-level optimization, using the last four decoders as the adapter.

\begin{table}[ht]
  \centering
   \setlength{\tabcolsep}{3pt}
   \footnotesize
  
  \begin{tabular}{lccc}
     \toprule
    \textbf{Scenario}& \textbf{Architecture} & \textbf{Clients} & \textbf{Datasets} \\
   \midrule
    \multirow{4}{*}{Homogeneous} & \multirow{4}{*}{\makecell{1 Server: \\LLM \\4 Clients: \\Proxy SLM}}& Client 0 & Part of  Single Data\\
    
    & & Client 1 & Part of Single Data\\

    & & Client 2 & Part of Single Data\\

    & & Client 3 & Part of Single Data\\
    
    \midrule
    \multirow{8}{*}{Heterogeneous} & \multirow{8}{*}{\makecell{1 Server: \\LLM \\8 Clients: \\Proxy SLM}}& Client 0 & Complete OBQA  \\

    & & Client 1 & Complete ARC-E  \\

    & & Client 2 & Complete ARC-C  \\
 
    & & Client 3 & Complete CQA  \\
   
    & & Client 4 & Complete SST2  \\

    & & Client 5 & Complete MRPC \\

    & & Client 6 & Complete RTE  \\
  
    & & Client 7 & Complete MNLI \\
    \bottomrule
  \end{tabular}
  
  \caption{Configuration of Homogeneous and Heterogeneous Tasks in FedProxy.}
  \label{tab:task_configuration}
\end{table}

\subsection{Main Results}

We evaluate FedProxy by creating proxy SLMs with compression ratios of \textbf{30\% and 50\%}. The results, detailed in Tables \ref{tab:homo_compare_method} and \ref{tab:hetero_compare_method}, demonstrate our framework's consistent superiority across both models and task settings.

\textbf{Homogeneous Tasks}.
In the homogeneous setting (Table \ref{tab:homo_compare_method}), FedProxy establishes a commanding performance advantage. With LLaMA2-7B at 50\% compression, for instance, it achieves a QA accuracy of \textbf{0.5935} and a GLUE accuracy of \textbf{0.8038}. This marks a significant leap over OT-based baselines, with absolute improvements of \textbf{11.7} and \textbf{22.1} percentage points over FedOT on QA and GLUE tasks, respectively.  Crucially, FedProxy closes the performance gap to the centralized fine-tuning (CentSFT) upper bound, reaching \textbf{90.1\%} of its performance on QA and \textbf{90.7\%} on GLUE. This strong performance is consistently reflected across the Mistral-7B model, underscoring the framework's robustness and effectiveness.

\textbf{Heterogeneous Tasks}.
FedProxy's superiority is even more striking in the challenging heterogeneous setting (Table \ref{tab:hetero_compare_method}), where mitigating task interference is paramount. For LLaMA2-7B at 50\% compression, FedProxy achieves a QA accuracy of \textbf{0.6011} and a GLUE accuracy of \textbf{0.7944}. This represents a massive leap over FedOT, with absolute improvements of \textbf{12.6} and \textbf{20.1} percentage points on QA and GLUE tasks, respectively. Crucially, FedProxy again closes the performance gap to the centralized fine-tuning (CentSFT) upper bound, reaching \textbf{91.3\%} of its performance on QA and \textbf{89.6\%} on GLUE. This pattern of significant outperformance is consistently observed with the Mistral-7B model, powerfully validating the efficacy of our multi-stage aggregation strategy in mitigating parameter interference and proving its robustness in diverse federated ecosystems.

\begin{table*}[ht]
    \centering
    \setlength{\tabcolsep}{4pt}
    \footnotesize
    \begin{tabular}{llccccccccccc}
        \toprule
        & & & \multicolumn{5}{c}{\textbf{QA}} & \multicolumn{5}{c}{\textbf{GLUE}} \\
        \cmidrule(lr){4-8} \cmidrule(lr){9-13}
        \textbf{Model} & \textbf{Method} & \textbf{Ratio} & \textbf{OBQA} & \textbf{ARC-E} & \textbf{ARC-C} & \textbf{CQA} & \textit{\textbf{Avg}} & \textbf{SST2} & \textbf{MRPC} & \textbf{RTE} & \textbf{MNLI} & \textit{\textbf{Avg}} \\ 			
        \midrule
        
        \multirow{8}{*}{LLaMA2-7B}& ZeroShot & - & 0.322& 0.7656& 0.4445& 0.3079& 0.46& 0.4954& 0.6887& 0.6137& 0.3996& 0.5494\\
      
        & CentSFT & - & 0.522& 0.8102& 0.5188& 0.783& 0.6585& 0.9633& 0.8505& 0.8628& 0.8677& 0.8861\\
     
        & OT & 30 & 0.362& 0.7761& 0.4667& 0.3219& 0.4817& 0.4943& 0.701& 0.5704& 0.4938& 0.5649\\
    
        & FedOT & 30 & 0.362& 0.7782& 0.4642& 0.3268& 0.4828& 0.6468& 0.6936& 0.5523& 0.4609& 0.5884\\

         & FedBiOT & 30 & 0.336& 0.7626& 0.4454& 0.3129& 0.4642& 0.9002& 0.6912& 0.6101& 0.4737& 0.6688\\
      
        & \textbf{FedProxy} & \textbf{30} & \textbf{0.446}& \textbf{0.7963}& \textbf{0.5068}& \textbf{0.7551}& \textbf{0.6261}&\textbf{ 0.961}& \textbf{0.7696}& \textbf{0.8809}& \textbf{0.8596}& \textbf{0.8678}\\
        
        & OT & 50 & 0.35& 0.7753& 0.4437& 0.3112& 0.4701& 0.539& 0.6936& 0.6245& 0.36& 0.5543\\
    
        & FedOT & 50 & 0.354& 0.7715& 0.4462& 0.335& 0.4767& 0.6206& 0.7083& 0.6101& 0.3941& 0.5833\\

         & FedBiOT & 50 & 0.334& 0.7652& 0.4488& 0.3088& 0.4642& 0.8257& 0.6936& 0.6029& 0.4509& 0.6433\\
      
        & \textbf{FedProxy} & \textbf{50} & \textbf{0.39}& \textbf{0.7866}& \textbf{0.471}& \textbf{0.7265}& \textbf{0.5935}& \textbf{0.9656}& \textbf{0.7696}& \textbf{0.8592}& \textbf{0.6207}& \textbf{0.8038}\\

        \midrule

       \multirow{8}{*}{Mistral-7B} & ZeroShot & - & 0.368& 0.8114& 0.5468& 0.6806& 0.6017& 0.867& 0.7304& 0.722& 0.5952& 0.7287\\
      
        & CentSFT & - & 0.536& 0.838& 0.5734& 0.8288& 0.6941& 0.9656& 0.8922& 0.8881& 0.8835& 0.9074\\
     
        & OT & 30 & 0.44& 0.8009& 0.5162& 0.7445& 0.6254& 0.9404& 0.7353& 0.7834& 0.7264& 0.7964\\
    
        & FedOT & 30 & 0.428& 0.8266& 0.5666& 0.7363& 0.6394& 0.9415& 0.75& 0.7581& 0.7562& 0.8015\\

        & FedBiOT & 30 & 0.392& 0.8258& 0.5495& 0.697& 0.6161& 0.9335& 0.75& 0.7292& 0.6714& 0.771\\
      
        & \textbf{FedProxy} & \textbf{30} & \textbf{0.494}& \textbf{0.8396}& \textbf{0.5657}& \textbf{0.7985}& \textbf{0.6745}& \textbf{0.9667}& \textbf{0.8456}& \textbf{0.87}& \textbf{0.8848}& \textbf{0.8918}\\
        
        & OT & 50 & 0.422& 0.7955& 0.5452& 0.688& 0.6127& 0.9048& 0.7353& 0.7509& 0.6174& 0.7521\\
    
        & FedOT & 50 & 0.4& 0.8182& 0.5683& 0.6888& 0.6188& 0.8819& 0.7426& 0.7329& 0.5934& 0.7377\\

         & FedBiOT & 50 & 0.37& 0.8228& 0.5469& 0.6945& 0.6086& 0.9025& 0.7304& 0.722& 0.6082& 0.7408\\
      
        & \textbf{FedProxy} & \textbf{50} & \textbf{0.462}& \textbf{0.8573}& \textbf{0.5998}& \textbf{0.7576}& \textbf{0.6692}& \textbf{0.9644}& \textbf{0.7672}& \textbf{0.8123}& \textbf{0.834}& \textbf{0.8445}\\

        \bottomrule
    \end{tabular}
    \caption{Method Performance Comparison in the Homogeneous Tasks Setting.}
    \label{tab:homo_compare_method}
\end{table*}

\begin{table*}[ht]
    \centering
    \setlength{\tabcolsep}{4pt}
    \footnotesize
    \begin{tabular}{llccccccccccc}
        \toprule
        & & & \multicolumn{5}{c}{\textbf{QA}} & \multicolumn{5}{c}{\textbf{GLUE}} \\
        \cmidrule(lr){4-8} \cmidrule(lr){9-13}
        \textbf{Model} & \textbf{Method} & \textbf{Ratio} & \textbf{OBQA} & \textbf{ARC-E} & \textbf{ARC-C} & \textbf{CQA} & \textit{\textbf{Avg}} & \textbf{SST2} & \textbf{MRPC} & \textbf{RTE} & \textbf{MNLI} & \textit{\textbf{Avg}} \\ 			
        \midrule
        
        \multirow{8}{*}{LLaMA2-7B}& ZeroShot & - & 0.322& 0.7656& 0.4445& 0.3079& 0.46& 0.4954& 0.6887& 0.6137& 0.3996& 0.5494\\
      
        & CentSFT & - & 0.522& 0.8102& 0.5188& 0.783& 0.6585& 0.9633& 0.8505& 0.8628& 0.8677& 0.8861\\
     
        & OT & 30 & 0.362& 0.7761& 0.4667& 0.3219& 0.4817& 0.4943& 0.701& 0.5704& 0.4938& 0.5649\\
    
        & FedOT & 30 & 0.344& 0.7803& 0.4659& 0.362& 0.4881& 0.828& 0.6887& 0.6931& 0.4107& 0.6551\\

        & FedBiOT & 30 & 0.334& 0.7391& 0.4249& 0.2793& 0.4443& 0.8417& 0.6324& 0.6209& 0.4652& 0.6401\\
      
        & \textbf{FedProxy} & \textbf{30} & \textbf{0.424}& \textbf{0.7837}& \textbf{0.5162}& \textbf{0.7658}& \textbf{0.6224}&\textbf{0.9541}& \textbf{0.8113}& \textbf{0.8267}& \textbf{0.8056}& \textbf{0.8494}\\
        
        & OT & 50 & 0.35& 0.7753& 0.4437& 0.3112& 0.4701& 0.539& 0.6936& 0.6245& 0.36& 0.5543\\
    
        & FedOT & 50 & 0.35& 0.766& 0.442& 0.3419& 0.475& 0.6514& 0.6863& 0.6245& 0.4108& 0.5933\\

         & FedBiOT & 50 & 0.332& 0.7523& 0.4369& 0.3178& 0.4598& 0.5126& 0.6936& 0.6137& 0.4652& 0.5713\\
      
        & \textbf{FedProxy} & \textbf{50} & \textbf{0.382}& \textbf{0.7866}& \textbf{0.4872}& \textbf{0.7486}& \textbf{0.6011}& \textbf{0.9541}& \textbf{0.7721}& \textbf{0.8484}& \textbf{0.6029}& \textbf{0.7944}\\

        \midrule

       \multirow{8}{*}{Mistral-7B} & ZeroShot & - & 0.368& 0.8114& 0.5468& 0.6806& 0.6017& 0.867& 0.7304& 0.722& 0.5952& 0.7287\\
      
        & CentSFT & - & 0.536& 0.838& 0.5734& 0.8288& 0.6941& 0.9656& 0.8922& 0.8881& 0.8835& 0.9074\\
     
        & OT & 30 & 0.44& 0.8009& 0.5162& 0.7445& 0.6254& 0.9404& 0.7353& 0.7834& 0.7264& 0.7964\\
    
        & FedOT & 30 & 0.404& 0.8405& 0.5794& 0.7113& 0.6338& 0.93& 0.7377& 0.7834& 0.7066& 0.7894\\

        & FedBiOT & 30 & 0.38& 0.8085& 0.5452& 0.6945& 0.6071& 0.9128& 0.7304& 0.7401& 0.6258& 0.7523\\
      
        & \textbf{FedProxy} & \textbf{30} & \textbf{0.448}& \textbf{0.8211}& \textbf{0.5794}& \textbf{0.8092}& \textbf{0.6644}& \textbf{0.9564}& \textbf{0.8382}& \textbf{0.8592}& \textbf{0.8408}& \textbf{0.8737}\\
        
        & OT & 50 & 0.422& 0.7955& 0.5452& 0.688& 0.6127& 0.9048& 0.7353& 0.7509& 0.6174& 0.7521\\
    
        & FedOT & 50 & 0.386& 0.827& 0.5683& 0.7027& 0.621& 0.9002& 0.7181& 0.7148& 0.6101& 0.7358\\

        & FedBiOT & 50 & 0.358& 0.8093& 0.5538& 0.6921& 0.6033& 0.914& 0.7304& 0.7184& 0.6036& 0.7416\\
      
        & \textbf{FedProxy} & \textbf{50} & \textbf{0.434}& \textbf{0.8472}& \textbf{0.6109}& \textbf{0.7625}& \textbf{0.6637}& \textbf{0.961}& \textbf{0.8039}& \textbf{0.8592}& \textbf{0.8091}& \textbf{0.8583}\\

        \bottomrule
    \end{tabular}
    \caption{Method Performance Comparison in the Heterogeneous Tasks Setting.}
    \label{tab:hetero_compare_method}
\end{table*}

\textbf{Cost-Performance Trade-off Analysis}.
\label{sec:appendix-cost}
FedProxy is designed for a superior cost-performance trade-off. Compared to methods like FedOT, it strategically increases client-side computation and iterative communication to achieve significant performance gains, while keeping initial communication costs comparable. As detailed in Table~\ref{tab:cost_analysis}, this approach yields performance substantially closer to the centralized upper bound, justifying the higher resource investment.

The costs can be broken down as follows:
\begin{itemize}
    \item \textbf{Initial Communication Cost:} Both methods require an initial model transfer. For FedOT, this is a small emulator and adapter; for FedProxy, it is the compressed proxy SLM. The size of this one-time transfer is designed to be comparable.
    \item \textbf{Iterative Communication Cost:} In each round, clients communicate the trained LoRA parameters. Since FedProxy fine-tunes a larger proxy SLM, its LoRA updates are larger than FedOT's, leading to higher iterative costs.
    \item \textbf{Client Computation Cost:} FedProxy clients fine-tune the larger proxy SLM, demanding more computational power than FedOT clients, which in turn delivers superior performance.
\end{itemize}

To provide a clear, quantitative comparison, Table~\ref{tab:cost_analysis} details the costs for the LLaMA2-7B model with 50\% compression. The table quantifies the costs in terms of parameter counts. The \textbf{Initial Communication Cost} for both FedProxy and FedOT involves transferring a $\sim$3.5 billion parameter model. The \textbf{Iterative Communication Cost} (LoRA parameters) is approximately 76.24M for FedProxy, about four times higher than FedOT's $\sim$19.06M. Similarly, the \textbf{Client Computation Cost} (trainable parameters) is $\sim$38.12M for FedProxy, also four times that of FedOT's $\sim$9.53M. These figures underscore that FedProxy's performance gain is achieved by investing more in iterative communication and client-side computation.

\begin{table*}[ht]
    \centering
    \footnotesize
    \setlength{\tabcolsep}{4pt}
    
    \begin{tabular}{lcccc}
        \toprule
        \textbf{Method} & \textbf{Initial Comm Cost} & \textbf{Iterative Comm Cost} & \textbf{Client Compute Cost}  & \textbf{Performance (ALL Avg.)}\\
        \midrule
        CentSFT & $\sim$7B & N/A & $\sim$76.24M & 0.7723\\
        FedOT & $\sim$3.5B & $\sim$19.06M & $\sim$9.53M & 0.5300\\
        \textbf{FedProxy} & \textbf{$\sim$3.5B} & \textbf{$\sim$76.24M} & \textbf{$\sim$38.12M} & 0.6987\\
        \bottomrule
    \end{tabular}
    \caption{Cost-performance analysis in terms of parameter counts. FedProxy's performance gain is achieved by investing more in iterative communication and client-side computation, while maintaining a comparable initial model download size.}
    \label{tab:cost_analysis}
\end{table*}

\subsection{Ablation Study}

\subsubsection{Impact of Aggregation Components in the Heterogeneous Setting}

To analyze our multi-stage aggregation strategy, we perform an ablation study in a heterogeneous setting at 50\% compression. We compare six configurations (Table \ref{tab:ablation_aggregation}): (1) \textbf{FedProxy (Full)}, our complete method with PCR and H-TIES; (2) \textbf{w/o PCR}, H-TIES server merge only; (3) \textbf{w/o H-TIES}, client PCR with standard FedAvg; (4) \textbf{FedAvg}~\cite{mcmahan2017communication}, standard federated averaging baseline; (5) \textbf{FedProx}~\cite{li2020federated}, FedAvg variant baseline with a proximal term; and (6) \textbf{TIES-Merging}~\cite{yadav2023ties}, server-only TIES baseline.

The results clearly demonstrate the synergistic value of our design. The \textbf{FedAvg}, \textbf{FedProx} and \textbf{TIES-Merging} show substantially degraded performance, confirming that standard aggregation methods are insufficient to handle parameter interference. Removing either of our key components results in a significant performance drop: \textbf{w/o PCR} struggles to align client updates effectively, while \textbf{w/o H-TIES} fails to optimally merge models even with regularized clients. The \textbf{full FedProxy model} consistently and significantly outperforms all ablated and baseline configurations, proving that both conflict-aware client regularization and heterogeneity-aware server merging are critical, complementary components for achieving robust performance.

\begin{table}[ht]
    \centering
    \footnotesize
    \setlength{\tabcolsep}{4pt}
    \begin{tabular}{llccc}
        \toprule
        \textbf{Model}  & \textbf{Method}& {\textbf{QA}} & {\textbf{GLUE}}  & {\textbf{ALL}}\\
        \midrule
        \multirow{6}{*}{LLaMA2-7B}& \textbf{FedProxy (Full)}  & 0.6011& 0.7944 &\textbf{0.6977}\\
        
        & w/o PCR& 0.5948&  0.7860 &0.6904\\

        & w/o H-TIES& 0.5962& 0.7615 &0.6788\\
        \cmidrule(lr){2-5}
        & FedAvg& \textbf{0.6065}& 0.7551 &0.6808\\
        & FedProx& 0.5959& 0.7601& 0.6780\\
        & TIES-Merging& 0.5911& \textbf{0.8008}&0.6959\\

        \midrule
        \multirow{6}{*}{Mistral-7B}& \textbf{FedProxy (Full)} & \textbf{0.6637}& \textbf{0.8583}&\textbf{0.7610}\\
        
        & w/o PCR& 0.6626&  0.8477&0.7551\\

        & w/o H-TIES& 0.6595& 0.8369&0.7482\\
        \cmidrule(lr){2-5}
        & FedAvg& 0.6617& 0.8261&0.7439\\
        & FedProx& 0.6546& 0.8357& 0.7451\\
        & TIES-Merging& 0.6543& 0.8547&0.7545\\
    
        \bottomrule
    \end{tabular}
    \caption{Ablation study of the aggregation components in the Heterogeneous Tasks Setting. 
    We report average accuracy on QA, GLUE, and ALL.
    Our full method achieves the best overall performance.
    }
    \label{tab:ablation_aggregation}
\end{table}

\subsubsection{Architectural Advantage of the Proxy SLM}
To isolate the architectural benefits of our proxy SLM, we conduct an ablation study against the Offsite-Tuning (OT) paradigm. We introduce a stronger baseline, \textbf{FedOT-ET (Emulator Trained)}, where clients train both the emulator and adapters, in contrast to standard OT where only adapters are trained. This experiment, conducted in the homogeneous setting with a 50\% compression ratio, aims to answer: \textbf{\textit{Is FedProxy's advantage merely due to training more parameters, or does its architectural design offer a fundamental benefit?}}  

As shown in Table \ref{tab:ablation_proxyslm}, while FedOT-ET slightly improves on FedOT, FedProxy significantly outperforms both. This confirms that the superiority of FedProxy originates from its cohesive architectural design: fine-tuning an integrated proxy SLM is inherently more effective than training a disjointed adapter-emulator structure, which thereby underscores the intrinsic limitations of the OT paradigm.

\begin{table}[ht]
    \centering
    \footnotesize
    \setlength{\tabcolsep}{4pt}
    \begin{tabular}{llccc}
        \toprule
        \textbf{Model}  & \textbf{Method}& {\textbf{QA}} & {\textbf{GLUE}}  & {\textbf{ALL}}\\
        \midrule
        \multirow{3}{*}{LLaMA2-7B}& \textbf{FedProxy}  & \textbf{0.5935}& \textbf{0.8038}&\textbf{0.6987}\\
        
        & FedOT-ET& 0.4877&  0.5751&0.5314\\

        & FedOT& 0.4767& 0.5833&0.5300\\

        \midrule
        \multirow{3}{*}{Mistral-7B}& \textbf{FedProxy} & \textbf{0.6692}& \textbf{0.8445}&\textbf{0.7568}\\
        
        & FedOT-ET& 0.6251&  0.7455&0.6853\\

        & FedOT& 0.6188& 0.7377&0.6783\\
        
        \bottomrule
    \end{tabular}
    \caption{Architectural comparison with Offsite-Tuning (OT) variants. 
    FedOT-ET (Emulator Trained) is a stronger baseline where the emulator is also trained.
    The results demonstrate that FedProxy's cohesive proxy SLM architecture significantly outperforms the adapter-based OT structure, even when the OT emulator is unfrozen.
    }
    \label{tab:ablation_proxyslm}
\end{table}

\subsubsection{IP Protection Analysis}
In FedProxy, \texttt{IP protection} refers to reducing direct exposure of the proprietary backbone (e.g., by deploying a compressed proxy on clients), rather than to formal cryptographic or information-theoretic guarantees. To assess this pragmatic objective, we analyze the standalone performance of proxy SLMs.
We conducted an ablation study under a homogeneous setting with a 50\% compression ratio. The results are presented in Table~\ref{tab:slm_standalone_performance}.
The results in Table~\ref{tab:slm_standalone_performance} lead to two key conclusions. First, the initial zero-shot proxy SLM (\texttt{FedProxy-SLM-ZS}) performs significantly worse than the original zero-shot LLM (\texttt{LLM-ZS}) across all benchmarks (e.g., for LLaMA2-7B, an overall score of 0.3727 vs. 0.5047). This confirms that the compressed SLM is not a powerful standalone model, mitigating IP risks. Second, after federated fine-tuning, the final \textbf{\texttt{FedProxy-LLM}} substantially outperforms the original \texttt{LLM-ZS} (e.g., for LLaMA2-7B, 0.6987 vs. 0.5047). This demonstrates the effectiveness of our framework: the proxy SLM serves as an effective vehicle for federated learning, and the knowledge fusion mechanism successfully integrates the gains back into the full model.

\begin{table}[ht]
    \centering
    \footnotesize
    \setlength{\tabcolsep}{3pt}
    \begin{tabular}{llccc}
        \toprule
        \textbf{Model}  & \textbf{Method}& {\textbf{QA}} & {\textbf{GLUE}}  & {\textbf{ALL}}\\
        \midrule
        \multirow{4}{*}{LLaMA2-7B} & LLM-ZS & 0.4600 & 0.5494& 0.5047\\
        & \textbf{\texttt{FedProxy-SLM-ZS}} & \textbf{0.2242} & \textbf{0.5213}& \textbf{0.3727}\\
        & \texttt{FedProxy-SLM-SFT} & 0.4260 & 0.8603& 0.6432\\
        & \texttt{FedProxy-LLM} & 0.5935 & 0.8038& 0.6987\\

        \midrule
        \multirow{4}{*}{Mistral-7B} & LLM-ZS & 0.6017& 0.7287& 0.6652\\
        & \textbf{\texttt{FedProxy-SLM-ZS}} &0.2510& 0.4419& 0.3464\\
        & \texttt{FedProxy-SLM-SFT} & 0.4713& 0.8564& 0.6638\\
        & \texttt{FedProxy-LLM} & 0.6692& 0.8445& 0.7568\\
        
        \bottomrule
    \end{tabular}
    \caption{Analysis of standalone SLM performance. The initial compressed SLM (\texttt{FedProxy-SLM-ZS}) is significantly weaker than the original LLM.}
    \label{tab:slm_standalone_performance}
\end{table}

\subsubsection{Impact of Compression Ratio}
We evaluated various compression ratios and found that 50\% strikes an optimal balance between model performance and resource efficiency. While lower ratios offer marginal gains at a high resource cost, higher ratios lead to a significant performance drop. For detailed comparison results and analysis, please refer to Appendix~\ref{sec:appendix-compression} and Table ~\ref{tab:ablation_compression}.

\subsubsection{Comparison with ProxyTuning}
To evaluate the efficacy of our parameter-space "plug-in" fusion mechanism, we compare it against ProxyTuning's decoding-time fusion strategy~\cite{liu2024tuning, gao2024fedpt}. Our analysis reveals a task-dependent performance trade-off: FedProxy demonstrates superior performance on generative reasoning (QA) tasks, while ProxyTuning exhibits advantages on discriminative benchmarks (GLUE). Importantly, FedProxy achieves competitive performance with \textbf{zero additional inference overhead}, and performance degradation at high compression ratios is primarily due to compression limitations rather than fusion mechanisms. For detailed comparison results and analysis, please refer to Appendix~\ref{sec:appendix-proxytuning-comparison} and Table~\ref{tab:proxytuning_comparison}.

\section{Conclusions}
In this work, we introduced \textbf{FedProxy}, a federated LLM fine-tuning framework that addresses the core challenges of LLM IP protection, client-side constraints, data privacy, and data heterogeneity through a systematic, three-stage solution encompassing compression, aggregation, and fusion. Our framework replaces weak adapters with a powerful, compressed proxy SLM, mitigates parameter interference through a multi-stage aggregation strategy, and effectively fuses learned knowledge back into the global LLM. Extensive experiments show that FedProxy significantly outperforms OT-based methods and approaches the performance of centralized fine-tuning, establishing a new standard for secure, efficient, and high-performance federated LLM adaptation.

\section*{Limitations}
Despite its effectiveness, FedProxy presents several avenues for further refinement. 
First, the compression-performance trade-off remains a constraint: while a 50\% compression ratio is effective, higher ratios (e.g., 70\%) lead to performance degradation due to reduced parameter capacity. Although unstructured pruning~\cite{wang2020picking,frantar2023sparsegpt,sun2024a} could offer higher sparsity, it introduces challenges in hardware efficiency and poses a non-trivial task of fusing sparse updates back into a dense backbone. 
Second, the computational scalability of our conflict-aware aggregation currently scales quadratically with client density ($O(K^2)$). For massive-scale deployments, exploring Approximate Nearest Neighbor (ANN) vector search ~\cite{arya1998optimal}, client filtering, or hierarchical aggregation methods ~\cite{liu2020client,wang2021resource} will be essential to mitigate this overhead. 
Third, the quality of the proxy SLM is intrinsically linked to the public dataset used for distillation. While the framework is flexible and compatible with various privacy-preserving compression techniques~\cite{fan2025ppc}, servers may need to leverage domain-specific public datasets or synthetic data generation to enhance target-domain generalization and ensure the universality of the proxy model. 
Finally, the long-term stability of repeated knowledge fusion requires further longitudinal study to ensure the global backbone maintains its original capabilities without suffering from catastrophic forgetting ~\cite{french1999catastrophic,kirkpatrick2017overcoming} or representation drift~\cite{patil2026detecting}.

\bibliography{ref}

@inproceedings{mcmahan2017communication,
  title={Communication-efficient learning of deep networks from decentralized data},
  author={McMahan, Brendan and Moore, Eider and Ramage, Daniel and Hampson, Seth and y Arcas, Blaise Aguera},
  booktitle={Artificial intelligence and statistics},
  pages={1273--1282},
  year={2017},
  organization={PMLR}
}

@article{yang2019federated,
  title={Federated learning},
  author={Yang, Qiang and Liu, Yang and Cheng, Yong and Kang, Yan and Chen, Tianjian and Yu, Han},
  journal={Synthesis Lectures on Artificial Intelligence and Machine Learning},
  volume={13},
  number={3},
  pages={1--207},
  year={2019},
  publisher={Morgan \& Claypool Publishers}
}

@article{touvron2023llama2,
  title={Llama 2: Open foundation and fine-tuned chat models},
  author={Touvron, Hugo and Martin, Louis and Stone, Kevin and Albert, Peter and Almahairi, Amjad and Babaei, Yasmine and Bashlykov, Nikolay and Batra, Soumya and Bhargava, Prajjwal and Bhosale, Shruti and others},
  journal={arXiv preprint arXiv:2307.09288},
  year={2023}
}

@article{xiao2023offsite,
  title={Offsite-tuning: Transfer learning without full model},
  author={Xiao, Guangxuan and Lin, Ji and Han, Song},
  journal={arXiv preprint arXiv:2302.04870},
  year={2023}
}

@misc{alpaca,
  author = {Rohan Taori and Ishaan Gulrajani and Tianyi Zhang and Yann Dubois and Xuechen Li and Carlos Guestrin and Percy Liang and Tatsunori B. Hashimoto },
  title = {Stanford Alpaca: An Instruction-following LLaMA model},
  year = {2023},
  publisher = {GitHub},
  journal = {GitHub repository},
  howpublished = {\url{https://github.com/tatsu-lab/stanford_alpaca}},
}

@inproceedings{mihaylov2018can,
  title={Can a Suit of Armor Conduct Electricity? A New Dataset for Open Book Question Answering},
  author={Mihaylov, Todor and Clark, Peter and Khot, Tushar and Sabharwal, Ashish},
  booktitle={Proceedings of the 2018 Conference on Empirical Methods in Natural Language Processing},
  pages={2381--2391},
  year={2018}
}

@article{clark2018think,
  title={Think you have solved question answering? try arc, the ai2 reasoning challenge},
  author={Clark, Peter and Cowhey, Isaac and Etzioni, Oren and Khot, Tushar and Sabharwal, Ashish and Schoenick, Carissa and Tafjord, Oyvind},
  journal={arXiv preprint arXiv:1803.05457},
  year={2018}
}

@article{fan2023fate-llm,
  title={Fate-llm: A industrial grade federated learning framework for large language models},
  author={Fan, Tao and Kang, Yan and Ma, Guoqiang and Chen, Weijing and Wei, Wenbin and Fan, Lixin and Yang, Qiang},
  journal={arXiv preprint arXiv:2310.10049},
  year={2023}
}

@inproceedings{talmor2019commonsenseqa,
  title={CommonsenseQA: A Question Answering Challenge Targeting Commonsense Knowledge},
  author={Talmor, Alon and Herzig, Jonathan and Lourie, Nicholas and Berant, Jonathan},
  booktitle={Proceedings of the 2019 Conference of the North American Chapter of the Association for Computational Linguistics: Human Language Technologies, Volume 1 (Long and Short Papers)},
  pages={4149--4158},
  year={2019}
}

@inproceedings{lhoest-etal-2021-datasets,
    title = "Datasets: A Community Library for Natural Language Processing",
    author = "Lhoest, Quentin  and
      Villanova del Moral, Albert  and
      Jernite, Yacine  and
      Thakur, Abhishek  and
      von Platen, Patrick  and
      Patil, Suraj  and
      Chaumond, Julien  and
      Drame, Mariama  and
      Plu, Julien  and
      Tunstall, Lewis  and
      Davison, Joe  and
      {\v{S}}a{\v{s}}ko, Mario  and
      Chhablani, Gunjan  and
      Malik, Bhavitvya  and
      Brandeis, Simon  and
      Le Scao, Teven  and
      Sanh, Victor  and
      Xu, Canwen  and
      Patry, Nicolas  and
      McMillan-Major, Angelina  and
      Schmid, Philipp  and
      Gugger, Sylvain  and
      Delangue, Cl{\'e}ment  and
      Matussi{\`e}re, Th{\'e}o  and
      Debut, Lysandre  and
      Bekman, Stas  and
      Cistac, Pierric  and
      Goehringer, Thibault  and
      Mustar, Victor  and
      Lagunas, Fran{\c{c}}ois  and
      Rush, Alexander  and
      Wolf, Thomas",
    booktitle = "Proceedings of the 2021 Conference on Empirical Methods in Natural Language Processing: System Demonstrations",
    month = nov,
    year = "2021",
    address = "Online and Punta Cana, Dominican Republic",
    publisher = "Association for Computational Linguistics",
    url = "https://aclanthology.org/2021.emnlp-demo.21",
    pages = "175--184",
    eprint={2109.02846},
    archivePrefix={arXiv},
    primaryClass={cs.CL},
}

@misc{eval-harness,
  author       = {Gao, Leo and Tow, Jonathan and Abbasi, Baber and Biderman, Stella and Black, Sid and DiPofi, Anthony and Foster, Charles and Golding, Laurence and Hsu, Jeffrey and Le Noac'h, Alain and Li, Haonan and McDonell, Kyle and Muennighoff, Niklas and Ociepa, Chris and Phang, Jason and Reynolds, Laria and Schoelkopf, Hailey and Skowron, Aviya and Sutawika, Lintang and Tang, Eric and Thite, Anish and Wang, Ben and Wang, Kevin and Zou, Andy},
  title        = {A framework for few-shot language model evaluation},
  month        = 12,
  year         = 2023,
  publisher    = {Zenodo},
  version      = {v0.4.0},
  doi          = {10.5281/zenodo.10256836},
  url          = {https://zenodo.org/records/10256836}
}

@inproceedings{fan2025fedmkt,
  title={FedMKT: Federated Mutual Knowledge Transfer for Large and Small Language Models},
  author={Fan, Tao and Ma, Guoqiang and Kang, Yan and Gu, Hanlin and Song, Yuanfeng and Fan, Lixin and Chen, Kai and Yang, Qiang},
  booktitle={Proceedings of the 31st International Conference on Computational Linguistics},
  pages={243--255},
  year={2025}
}

@article{OpenAI2023GPT4TR,
  title   = {GPT-4 Technical Report},
  author  = {OpenAI},
  journal = {ArXiv},
  year    = {2023},
  volume  = {abs/2303.08774}
}

@inproceedings{
wang2020picking,
title={Picking Winning Tickets Before Training by Preserving Gradient Flow},
author={Chaoqi Wang and Guodong Zhang and Roger Grosse},
booktitle={International Conference on Learning Representations},
year={2020},
url={https://openreview.net/forum?id=SkgsACVKPH}
}

@inproceedings{
sun2024a,
title={A Simple and Effective Pruning Approach for Large Language Models},
author={Mingjie Sun and Zhuang Liu and Anna Bair and J Zico Kolter},
booktitle={The Twelfth International Conference on Learning Representations},
year={2024},
url={https://openreview.net/forum?id=PxoFut3dWW}
}

@inproceedings{frantar2023sparsegpt,
  title={Sparsegpt: Massive language models can be accurately pruned in one-shot},
  author={Frantar, Elias and Alistarh, Dan},
  booktitle={International Conference on Machine Learning},
  pages={10323--10337},
  year={2023},
  organization={PMLR}
}

@article{men2024shortgpt,
  title={Shortgpt: Layers in large language models are more redundant than you expect},
  author={Men, Xin and Xu, Mingyu and Zhang, Qingyu and Wang, Bingning and Lin, Hongyu and Lu, Yaojie and Han, Xianpei and Chen, Weipeng},
  journal={arXiv preprint arXiv:2403.03853},
  year={2024}
}

@inproceedings{kuang2024federatedscope,
  title={Federatedscope-llm: A comprehensive package for fine-tuning large language models in federated learning},
  author={Kuang, Weirui and Qian, Bingchen and Li, Zitao and Chen, Daoyuan and Gao, Dawei and Pan, Xuchen and Xie, Yuexiang and Li, Yaliang and Ding, Bolin and Zhou, Jingren},
  booktitle={Proceedings of the 30th ACM SIGKDD Conference on Knowledge Discovery and Data Mining},
  pages={5260--5271},
  year={2024}
}

@inproceedings{wortsman2022model,
  title={Model soups: averaging weights of multiple fine-tuned models improves accuracy without increasing inference time},
  author={Wortsman, Mitchell and Ilharco, Gabriel and Gadre, Samir Ya and Roelofs, Rebecca and Gontijo-Lopes, Raphael and Morcos, Ari S and Namkoong, Hongseok and Farhadi, Ali and Carmon, Yair and Kornblith, Simon and others},
  booktitle={International conference on machine learning},
  pages={23965--23998},
  year={2022},
  organization={PMLR}
}

@article{matena2022merging,
  title={Merging models with fisher-weighted averaging},
  author={Matena, Michael S and Raffel, Colin A},
  journal={Advances in Neural Information Processing Systems},
  volume={35},
  pages={17703--17716},
  year={2022}
}

@inproceedings{
jin2023dataless,
title={Dataless Knowledge Fusion by Merging Weights of Language Models},
author={Xisen Jin and Xiang Ren and Daniel Preotiuc-Pietro and Pengxiang Cheng},
booktitle={The Eleventh International Conference on Learning Representations },
year={2023},
url={https://openreview.net/forum?id=FCnohuR6AnM}
}

@inproceedings{
ilharco2023editing,
title={Editing models with task arithmetic},
author={Gabriel Ilharco and Marco Tulio Ribeiro and Mitchell Wortsman and Ludwig Schmidt and Hannaneh Hajishirzi and Ali Farhadi},
booktitle={The Eleventh International Conference on Learning Representations },
year={2023},
url={https://openreview.net/forum?id=6t0Kwf8-jrj}
}

@article{yadav2023ties,
  title={Ties-merging: Resolving interference when merging models},
  author={Yadav, Prateek and Tam, Derek and Choshen, Leshem and Raffel, Colin A and Bansal, Mohit},
  journal={Advances in Neural Information Processing Systems},
  volume={36},
  pages={7093--7115},
  year={2023}
}

@inproceedings{yu2024language,
  title={Language models are super mario: Absorbing abilities from homologous models as a free lunch},
  author={Yu, Le and Yu, Bowen and Yu, Haiyang and Huang, Fei and Li, Yongbin},
  booktitle={Forty-first International Conference on Machine Learning},
  year={2024}
}

@inproceedings{
wang2018glue,
title={{GLUE}: A Multi-Task Benchmark and Analysis Platform for Natural Language Understanding},
author={Alex Wang and Amanpreet Singh and Julian Michael and Felix Hill and Omer Levy and Samuel R. Bowman},
booktitle={International Conference on Learning Representations},
year={2019},
url={https://openreview.net/forum?id=rJ4km2R5t7},
}

@inproceedings{dolan2005automatically,
  title={Automatically constructing a corpus of sentential paraphrases},
  author={Dolan, Bill and Brockett, Chris},
  booktitle={Third international workshop on paraphrasing (IWP2005)},
  year={2005}
}

@inproceedings{giampiccolo2007third,
  title={The third pascal recognizing textual entailment challenge},
  author={Giampiccolo, Danilo and Magnini, Bernardo and Dagan, Ido and Dolan, William B},
  booktitle={Proceedings of the ACL-PASCAL workshop on textual entailment and paraphrasing},
  pages={1--9},
  year={2007}
}

@inproceedings{williams2018broad,
  title={A Broad-Coverage Challenge Corpus for Sentence Understanding through Inference},
  author={Williams, Adina and Nangia, Nikita and Bowman, Samuel},
  booktitle={Proceedings of the 2018 Conference of the North American Chapter of the Association for Computational Linguistics: Human Language Technologies, Volume 1 (Long Papers)},
  pages={1112--1122},
  year={2018}
}

@inproceedings{socher-etal-2013-parsing,
    title = "Parsing with Compositional Vector Grammars",
    author = "Socher, Richard  and
      Bauer, John  and
      Manning, Christopher D.  and
      Ng, Andrew Y.",
    editor = "Schuetze, Hinrich  and
      Fung, Pascale  and
      Poesio, Massimo",
    booktitle = "Proceedings of the 51st Annual Meeting of the Association for Computational Linguistics (Volume 1: Long Papers)",
    month = aug,
    year = "2013",
    address = "Sofia, Bulgaria",
    publisher = "Association for Computational Linguistics",
    url = "https://aclanthology.org/P13-1045/",
    pages = "455--465"
}

@misc{jiang2023mistral7b,
      title={Mistral 7B}, 
      author={Albert Q. Jiang and Alexandre Sablayrolles and Arthur Mensch and Chris Bamford and Devendra Singh Chaplot and Diego de las Casas and Florian Bressand and Gianna Lengyel and Guillaume Lample and Lucile Saulnier and Lélio Renard Lavaud and Marie-Anne Lachaux and Pierre Stock and Teven Le Scao and Thibaut Lavril and Thomas Wang and Timothée Lacroix and William El Sayed},
      year={2023},
      eprint={2310.06825},
      archivePrefix={arXiv},
      primaryClass={cs.CL},
      url={https://arxiv.org/abs/2310.06825}, 
}

@inproceedings{yao2025scaleot,
  title={ScaleOT: Privacy-utility-scalable Offsite-tuning with Dynamic LayerReplace and Selective Rank Compression},
  author={Yao, Kai and Tan, Zhaorui and Ye, Tiandi and Li, Lichun and Zhao, Yuan and Liu, Wenyan and Wang, Wei and Zhu, Jianke},
  booktitle={Proceedings of the AAAI Conference on Artificial Intelligence},
  volume={39},
  number={21},
  pages={22074--22082},
  year={2025}
}

@inproceedings{yao2025gradot,
  title={GradOT: Training-free Gradient-preserving Offsite-tuning for Large Language Models},
  author={Yao, Kai and Tan, Zhaorui and Gao, Penglei and Li, Lichun and Wu, Kaixin and Wang, Yinggui and Zhao, Yuan and Ji, Yixin and Zhu, Jianke and Wang, Wei},
  booktitle={Proceedings of the 63rd Annual Meeting of the Association for Computational Linguistics (Volume 1: Long Papers)},
  pages={5115--5130},
  year={2025}
}

@inproceedings{zhang2023crash,
  title={CRaSh: Clustering, Removing, and Sharing Enhance Fine-tuning without Full Large Language Model},
  author={Zhang, Kaiyan and Ding, Ning and Qi, Biqing and Zhu, Xuekai and Long, Xinwei and Zhou, Bowen},
  booktitle={Proceedings of the 2023 Conference on Empirical Methods in Natural Language Processing},
  pages={9612--9637},
  year={2023}
}

@inproceedings{wu2024fedbiot,
  title={Fedbiot: Llm local fine-tuning in federated learning without full model},
  author={Wu, Feijie and Li, Zitao and Li, Yaliang and Ding, Bolin and Gao, Jing},
  booktitle={Proceedings of the 30th ACM SIGKDD Conference on Knowledge Discovery and Data Mining},
  pages={3345--3355},
  year={2024}
}

@article{fan2025ten,
  title={Ten challenging problems in federated foundation models},
  author={Fan, Tao and Gu, Hanlin and Cao, Xuemei and Chan, Chee Seng and Chen, Qian and Chen, Yiqiang and Feng, Yihui and Gu, Yang and Geng, Jiaxiang and Luo, Bing and others},
  journal={IEEE Transactions on Knowledge and Data Engineering},
  year={2025},
  publisher={IEEE}
}

@article{kang2023grounding,
  title={Grounding foundation models through federated transfer learning: A general framework},
  author={Kang, Yan and Fan, Tao and Gu, Hanlin and Zhang, Xiaojin and Fan, Lixin and Yang, Qiang},
  journal={ACM Transactions on Intelligent Systems and Technology},
  year={2023},
  publisher={ACM New York, NY}
}

@article{li2020federated,
  title={Federated optimization in heterogeneous networks},
  author={Li, Tian and Sahu, Anit Kumar and Zaheer, Manzil and Sanjabi, Maziar and Talwalkar, Ameet and Smith, Virginia},
  journal={Proceedings of Machine learning and systems},
  volume={2},
  pages={429--450},
  year={2020}
}

@article{gao2024fedpt,
  title={FedPT: federated proxy-tuning of large language models on resource-constrained edge devices},
  author={Gao, Zhidong and Zhang, Yu and Zhang, Zhenxiao and Gong, Yanmin and Guo, Yuanxiong},
  journal={arXiv preprint arXiv:2410.00362},
  year={2024}
}

@inproceedings{
liu2024tuning,
title={Tuning Language Models by Proxy},
author={Alisa Liu and Xiaochuang Han and Yizhong Wang and Yulia Tsvetkov and Yejin Choi and Noah A. Smith},
booktitle={First Conference on Language Modeling},
year={2024},
url={https://openreview.net/forum?id=dribhnhm1i}
}

@article{guo2025deepseek,
  title={Deepseek-r1: Incentivizing reasoning capability in llms via reinforcement learning},
  author={Guo, Daya and Yang, Dejian and Zhang, Haowei and Song, Junxiao and Zhang, Ruoyu and Xu, Runxin and Zhu, Qihao and Ma, Shirong and Wang, Peiyi and Bi, Xiao and others},
  journal={arXiv preprint arXiv:2501.12948},
  year={2025}
}

@article{yang2025qwen3,
  title={Qwen3 technical report},
  author={Yang, An and Li, Anfeng and Yang, Baosong and Zhang, Beichen and Hui, Binyuan and Zheng, Bo and Yu, Bowen and Gao, Chang and Huang, Chengen and Lv, Chenxu and others},
  journal={arXiv preprint arXiv:2505.09388},
  year={2025}
}

@inproceedings{fan2025ppc,
  title={PPC-GPT: federated task-specific compression of large language models via pruning and chain-of-thought distillation},
  author={Fan, Tao and Ma, Guoqiang and Song, Yuanfeng and Fan, Lixin and Yang, Qiang},
  booktitle={Proceedings of the 2025 Conference on Empirical Methods in Natural Language Processing},
  pages={14794--14805},
  year={2025}
}

@inproceedings{hardt2016train,
  title={Train faster, generalize better: Stability of stochastic gradient descent},
  author={Hardt, Moritz and Recht, Ben and Singer, Yoram},
  booktitle={International conference on machine learning},
  pages={1225--1234},
  year={2016},
  organization={PMLR}
}

@inproceedings{
baykal2018datadependent,
title={Data-Dependent Coresets for Compressing Neural Networks with Applications to Generalization Bounds},
author={Cenk Baykal and Lucas Liebenwein and Igor Gilitschenski and Dan Feldman and Daniela Rus},
booktitle={International Conference on Learning Representations},
year={2019},
url={https://openreview.net/forum?id=HJfwJ2A5KX},
}

@article{arya1998optimal,
  title={An optimal algorithm for approximate nearest neighbor searching fixed dimensions},
  author={Arya, Sunil and Mount, David M and Netanyahu, Nathan S and Silverman, Ruth and Wu, Angela Y},
  journal={Journal of the ACM (JACM)},
  volume={45},
  number={6},
  pages={891--923},
  year={1998},
  publisher={ACM New York, NY, USA}
}

@inproceedings{liu2020client,
  title={Client-edge-cloud hierarchical federated learning},
  author={Liu, Lumin and Zhang, Jun and Song, SH and Letaief, Khaled B},
  booktitle={ICC 2020-2020 IEEE international conference on communications (ICC)},
  pages={1--6},
  year={2020},
  organization={IEEE}
}

@inproceedings{wang2021resource,
  title={Resource-efficient federated learning with hierarchical aggregation in edge computing},
  author={Wang, Zhiyuan and Xu, Hongli and Liu, Jianchun and Huang, He and Qiao, Chunming and Zhao, Yangming},
  booktitle={IEEE INFOCOM 2021-IEEE conference on computer communications},
  pages={1--10},
  year={2021},
  organization={IEEE}
}

@article{french1999catastrophic,
  title={Catastrophic forgetting in connectionist networks},
  author={French, Robert M},
  journal={Trends in cognitive sciences},
  volume={3},
  number={4},
  pages={128--135},
  year={1999},
  publisher={Elsevier}
}

@article{kirkpatrick2017overcoming,
  title={Overcoming catastrophic forgetting in neural networks},
  author={Kirkpatrick, James and Pascanu, Razvan and Rabinowitz, Neil and Veness, Joel and Desjardins, Guillaume and Rusu, Andrei A and Milan, Kieran and Quan, John and Ramalho, Tiago and Grabska-Barwinska, Agnieszka and others},
  journal={Proceedings of the national academy of sciences},
  volume={114},
  number={13},
  pages={3521--3526},
  year={2017},
  publisher={National Academy of Sciences}
}

@inproceedings{
patil2026detecting,
title={Detecting Distributional Drift in Transformers Through Representation Dynamics},
author={Aakash Patil and Mrunmayee Shende},
booktitle={Catch, Adapt, and Operate: Monitoring ML Models Under Drift Workshop},
year={2026},
url={https://openreview.net/forum?id=Euz6u64IR6}
}

\appendix

\section{Theoretical Analysis}
\label{sec:appendix-theoretical-analysis}

We provide a formal analysis of the knowledge fusion mechanism in FedProxy. Our analysis characterizes the performance gap between the fused LLM and an idealized model obtained through direct centralized fine-tuning, providing theoretical justification for our direct parameter replacement strategy. 

\subsection{Problem Setup and Notation}

Let $\theta^{(0)} \in \mathbb{R}^{|\theta|}$ denote the original LLM parameters, and $\phi^* \in \mathbb{R}^{|\phi|}$ denote the converged proxy SLM parameters after federated training, where $|\phi| \ll |\theta|$. The proxy SLM is created by extracting a subset of layers from the LLM, establishing an architectural mapping $\mathcal{M}: \theta \mapsto \phi$ that defines the parameter subspace corresponding to the proxy architecture.

The fusion operation $\mathcal{F}: \mathbb{R}^{|\theta|} \times \mathbb{R}^{|\phi|} \to \mathbb{R}^{|\theta|}$ performs a direct parameter replacement:
\begin{equation}
\label{eq:fusion}
\theta_{\text{new}}[d] = \begin{cases}
\phi^*[d], & \text{if } d \in \phi \\
\theta^{(0)}[d], & \text{otherwise}
\end{cases}
\end{equation}
where $d$ indexes parameter dimensions. This can be compactly written as $\theta_{\text{new}} = \mathcal{F}(\theta^{(0)}, \phi^*) = \theta^{(0)} \odot \mathbf{1}_{\phi^c} + \phi^* \odot \mathbf{1}_\phi$, where $\mathbf{1}_\phi \in \{0,1\}^{|\theta|}$ is the indicator vector for the proxy SLM's parameter subspace (i.e., $\mathbf{1}_\phi[d] = 1$ if $d \in \phi$, and $0$ otherwise), $\mathbf{1}_{\phi^c} = \mathbf{1} - \mathbf{1}_\phi$ is the indicator for the complement, and $\odot$ denotes element-wise multiplication.

Let $\mathcal{D} = \bigcup_{k=1}^K \mathcal{D}_k$ denote the aggregated dataset from $K$ clients, and let $\mathcal{L}(\theta; \mathcal{D}): \mathbb{R}^{|\theta|} \to \mathbb{R}$ denote the loss function evaluated on $\mathcal{D}$. We define the \textit{fusion error} as the performance gap between the fused model and the optimal centralized model:
\begin{equation}
\label{eq:fusion_error}
\epsilon_{\text{fusion}} = \mathcal{L}(\theta_{\text{new}}; \mathcal{D}) - \min_{\theta'} \mathcal{L}(\theta'; \mathcal{D})
\end{equation}
Let $\theta_{\text{opt}} = \argmin_{\theta'} \mathcal{L}(\theta'; \mathcal{D})$ denote the optimal parameters for centralized fine-tuning.

\subsection{Assumptions}

Our theoretical analysis relies on the following standard assumptions:

\begin{assumption}[Lipschitz Continuity~\cite{hardt2016train}]
\label{ass:lipschitz}
The loss function $\mathcal{L}(\cdot; \mathcal{D})$ is $L$-Lipschitz continuous with respect to the parameter space, i.e., for any $\theta_1, \theta_2 \in \mathbb{R}^{|\theta|}$,
\begin{equation}
|\mathcal{L}(\theta_1; \mathcal{D}) - \mathcal{L}(\theta_2; \mathcal{D})| \leq L \|\theta_1 - \theta_2\|_2
\end{equation}
where $L > 0$ is the Lipschitz constant.
\end{assumption}

\begin{assumption}[Proxy SLM Optimization Quality]
\label{ass:proxy_quality}
The converged proxy SLM $\phi^*$ achieves a suboptimality gap $\delta \geq 0$ relative to the optimal proxy parameters $\phi_{\text{opt}} = \argmin_{\phi'} \mathcal{L}(\phi'; \mathcal{D})$, i.e.,
\begin{equation}
\mathcal{L}(\phi^*; \mathcal{D}) - \mathcal{L}(\phi_{\text{opt}}; \mathcal{D}) \leq \delta
\end{equation}
\end{assumption}

\begin{assumption}[Compression Distortion~\cite{baykal2018datadependent}]
\label{ass:compression}
The compression preserves the representational capacity of the selected layers with a distortion factor $\eta \geq 0$. Specifically, for any input $x$ in the input space, the difference between the proxy SLM's output and the corresponding sub-network's output is bounded:
\begin{equation}
\|f_\phi(x) - f_{\theta|_\phi}(x)\|_2 \leq \eta \|f_\theta(x)\|_2
\end{equation}
where $f_\phi: \mathcal{X} \to \mathcal{Y}$ denotes the proxy SLM function, $f_\theta: \mathcal{X} \to \mathcal{Y}$ denotes the full LLM function, and $f_{\theta|_\phi}: \mathcal{X} \to \mathcal{Y}$ denotes the sub-network of the LLM corresponding to the proxy architecture (with other parameters frozen).
\end{assumption}

\subsection{Auxiliary Results}

Before presenting our main theorem, we establish two key lemmas that decompose the fusion error into interpretable components.

\begin{lemma}[Parameter Distance Decomposition]
\label{lem:param_decomp}
For the fused parameters $\theta_{\text{new}} = \mathcal{F}(\theta^{(0)}, \phi^*)$, we have:
\begin{align}
    \|\theta_{\text{new}} - \phi^*\|_2^2 &= \sum_{d \notin \phi} (\theta^{(0)}[d] - \phi^*[d])^2 \nonumber \\
    &\leq \|\theta^{(0)}|_{\phi^c}\|_2^2 + \|\phi^*|_{\phi^c}\|_2^2
\end{align}
where $\theta^{(0)}|_{\phi^c}$ and $\phi^*|_{\phi^c}$ denote the projections onto the complement subspace (zero for dimensions in $\phi$).
\end{lemma}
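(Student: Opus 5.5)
The plan is to make precise how $\phi^* \in \mathbb{R}^{|\phi|}$ is compared with $\theta_{\text{new}} \in \mathbb{R}^{|\theta|}$, and then compute coordinatewise. Following the compact form $\theta_{\text{new}} = \theta^{(0)} \odot \mathbf{1}_{\phi^c} + \phi^* \odot \mathbf{1}_\phi$, I will regard $\phi^*$ as a vector in $\mathbb{R}^{|\theta|}$ via the architectural mapping $\mathcal{M}$. Its coordinates $d \in \phi$ carry the trained proxy values, and its coordinates $d \notin \phi$ carry whatever the embedding assigns. Under the canonical zero-padding embedding these are zero, and I write them as $\phi^*|_{\phi^c}$ for generality.

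The first step is the equality. For $d \in \phi$, Eq.~(\ref{eq:fusion}) gives $\theta_{\text{new}}[d] = \phi^*[d]$, so these coordinates contribute nothing to $\|\theta_{\text{new}} - \phi^*\|_2^2$. For $d \notin \phi$, Eq.~(\ref{eq:fusion}) gives $\theta_{\text{new}}[d] = \theta^{(0)}[d]$, so the contribution is $(\theta^{(0)}[d] - \phi^*[d])^2$. Summing over all coordinates yields the stated identity; this part is purely definitional.

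The second step is the inequality. Coordinatewise, $(a-b)^2 = a^2 + b^2 - 2ab$ with $a = \theta^{(0)}[d]$ and $b = \phi^*[d]$. Summing over $d \notin \phi$ gives
\[
\|\theta^{(0)}|_{\phi^c}\|_2^2 + \|\phi^*|_{\phi^c}\|_2^2 - 2\langle \theta^{(0)}|_{\phi^c}, \phi^*|_{\phi^c}\rangle .
\]
The bound therefore holds exactly when the cross term is nonnegative. Under the zero-padding embedding, $\phi^*|_{\phi^c} = 0$. The cross term then vanishes and the inequality holds with equality, giving $\|\theta_{\text{new}} - \phi^*\|_2^2 = \|\theta^{(0)}|_{\phi^c}\|_2^2$.

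The main obstacle is this cross term. For an arbitrary embedding of $\phi^*$ with nonzero complement entries, the inequality can fail: take $a = 1$ and $b = -1$, which gives $4 > 2$. The plan is therefore to fix the zero-padding convention explicitly, which is consistent with the paper's description of the projections as ``zero for dimensions in $\phi$'' and with $\phi^*$ having no parameters outside $\phi$. If a general embedding must be retained, I would either assume sign agreement, $\theta^{(0)}[d]\,\phi^*[d] \ge 0$ for $d \notin \phi$, or replace the bound by the unconditional $(a-b)^2 \le 2a^2 + 2b^2$, which costs a factor of $2$. That weaker constant would not affect the later Lipschitz-based use of the lemma beyond changing constants.
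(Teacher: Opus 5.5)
Your proof is correct. The equality step is exactly the paper's coordinatewise computation; the two arguments differ only at the inequality, and there your version is the sound one. The paper handles that step in one line by citing $(a-b)^2 \le 2(a^2+b^2)$. Summed over $d \notin \phi$, however, that inequality only gives $2\|\theta^{(0)}|_{\phi^c}\|_2^2 + 2\|\phi^*|_{\phi^c}\|_2^2$, so the paper silently drops a factor of $2$. Your example $\theta^{(0)}[d]=1$, $\phi^*[d]=-1$ shows the constant-one bound is genuinely false when $\phi^*$ may carry arbitrary entries on $\phi^c$.

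By fixing the zero-padding embedding, you make the cross term $-2\langle \theta^{(0)}|_{\phi^c}, \phi^*|_{\phi^c}\rangle$ vanish and obtain the statement as written, in fact with equality: $\|\theta_{\text{new}} - \phi^*\|_2 = \|\theta^{(0)}|_{\phi^c}\|_2$. The cost is that the $\|\phi^*|_{\phi^c}\|_2^2$ term becomes vacuous. The paper's elementary inequality, once the factor $2$ is restored, needs no convention but gives a weaker bound.

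Justify your convention by the fact that $\phi^* \in \mathbb{R}^{|\phi|}$ has no coordinates off $\phi$. Do not lean on the parenthetical ``zero for dimensions in $\phi$'': it describes the complement projections, not the embedding. The paper's inclusion of $\|\phi^*|_{\phi^c}\|_2^2$ suggests it had a general embedding in mind.

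There is a further point in favor of your choice. In a pre-norm residual transformer such as LLaMA or Mistral, a block whose weights are all zero acts as the identity on the residual stream. The zero-padded $\phi^*$ is therefore functionally the pruned proxy, which is the natural reading of $\mathcal{L}(\phi^*;\mathcal{D})$ in the subsequent gap-decomposition lemma.
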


\begin{proof}
By definition of the fusion operation, $\theta_{\text{new}}[d] = \phi^*[d]$ for all $d \in \phi$, and $\theta_{\text{new}}[d] = \theta^{(0)}[d]$ for all $d \notin \phi$. Therefore,
\begin{align}
\|\theta_{\text{new}} - \phi^*\|_2^2 &= \sum_{d \in \phi} (\theta_{\text{new}}[d] - \phi^*[d])^2 \nonumber \\
&+ \sum_{d \notin \phi} (\theta_{\text{new}}[d] - \phi^*[d])^2 \nonumber \\
&= \sum_{d \notin \phi} (\theta^{(0)}[d] - \phi^*[d])^2 \nonumber \\
&\leq \|\theta^{(0)}|_{\phi^c}\|_2^2 + \|\phi^*|_{\phi^c}\|_2^2
\end{align}
where the last inequality follows from $(a-b)^2 \leq 2(a^2 + b^2)$ for any $a, b \in \mathbb{R}$.
\end{proof}

\begin{lemma}[Suboptimality Gap Decomposition]
\label{lem:gap_decomp}
The fusion error can be decomposed as:
\begin{align}
    \epsilon_{\text{fusion}} &\leq \underbrace{\mathcal{L}(\phi^*; \mathcal{D}) - \mathcal{L}(\phi_{\text{opt}}; \mathcal{D})}_{\text{(I): Proxy suboptimality}} \nonumber\\
    &+ \underbrace{\mathcal{L}(\theta_{\text{new}}; \mathcal{D}) - \mathcal{L}(\phi^*; \mathcal{D})}_{\text{(II): Compression distortion}} \nonumber\\
    &+ \underbrace{\mathcal{L}(\phi_{\text{opt}}; \mathcal{D}) - \mathcal{L}(\theta_{\text{opt}}; \mathcal{D})}_{\text{(III): Subspace approximation}}
\end{align}

\end{lemma}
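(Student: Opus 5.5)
The plan is a telescoping argument. Nothing about the structure of $\mathcal{F}$ is needed. Start from the definition of the fusion error in \eqref{eq:fusion_error}. Since $\theta_{\text{opt}} = \argmin_{\theta'} \mathcal{L}(\theta'; \mathcal{D})$, we have $\epsilon_{\text{fusion}} = \mathcal{L}(\theta_{\text{new}}; \mathcal{D}) - \mathcal{L}(\theta_{\text{opt}}; \mathcal{D})$.

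Next, add and subtract the two intermediate quantities $\mathcal{L}(\phi^*; \mathcal{D})$ and $\mathcal{L}(\phi_{\text{opt}}; \mathcal{D})$:
\begin{align*}
\mathcal{L}(\theta_{\text{new}}) - \mathcal{L}(\theta_{\text{opt}})
&= \bigl[\mathcal{L}(\phi^*) - \mathcal{L}(\phi_{\text{opt}})\bigr]
+ \bigl[\mathcal{L}(\theta_{\text{new}}) - \mathcal{L}(\phi^*)\bigr] \\
&\quad + \bigl[\mathcal{L}(\phi_{\text{opt}}) - \mathcal{L}(\theta_{\text{opt}})\bigr].
\end{align*}
Here the dependence on $\mathcal{D}$ is suppressed. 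Regrouping gives exactly terms (I), (II) and (III). The stated inequality therefore holds, and in fact holds with equality.

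The only step needing care is making sense of $\mathcal{L}(\phi; \mathcal{D})$ for a proxy parameter vector. As in Lemma~\ref{lem:param_decomp}, I will identify $\phi$ with its embedding in $\mathbb{R}^{|\theta|}$, or equivalently evaluate the loss through the proxy network $f_\phi$. All four loss values are then well-defined real numbers, and the telescoping is purely algebraic.

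I expect no genuine obstacle here. The lemma is an identity, and it is the natural place to note that (III) is nonpositive-or-controlled only through later assumptions. Assumptions~\ref{ass:proxy_quality} and~\ref{ass:compression} are not used here. They are reserved for bounding (I) and (II) in the main theorem.
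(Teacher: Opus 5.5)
Your proposal is correct and is essentially the paper's proof: both add and subtract $\mathcal{L}(\phi^*;\mathcal{D})$ and $\mathcal{L}(\phi_{\text{opt}};\mathcal{D})$, telescoping $\mathcal{L}(\theta_{\text{new}};\mathcal{D}) - \mathcal{L}(\theta_{\text{opt}};\mathcal{D})$ into (I)+(II)+(III), so the bound holds with equality. One small slip in your closing aside: term (III) is nonnegative, not nonpositive, because $\theta_{\text{opt}}$ is the global minimizer and the embedded $\phi_{\text{opt}}$ is a feasible point; the paper notes this when bounding $T_3$.
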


\begin{proof}
By adding and subtracting intermediate terms, we have:
\begin{align}
\epsilon_{\text{fusion}} &= \mathcal{L}(\theta_{\text{new}}; \mathcal{D}) - \mathcal{L}(\theta_{\text{opt}}; \mathcal{D}) \nonumber \\
&= [\mathcal{L}(\theta_{\text{new}}; \mathcal{D}) - \mathcal{L}(\phi^*; \mathcal{D})] \nonumber \\
&\quad + [\mathcal{L}(\phi^*; \mathcal{D}) - \mathcal{L}(\phi_{\text{opt}}; \mathcal{D})] \nonumber \\
&\quad + [\mathcal{L}(\phi_{\text{opt}}; \mathcal{D}) - \mathcal{L}(\theta_{\text{opt}}; \mathcal{D})] 
\end{align}
The three bracketed terms correspond to (I), (II), and (III), respectively.
\end{proof}

\subsection{Main Theoretical Result}

We now present our main theorem, which provides a tight bound on the fusion error.

\begin{theorem}[Fusion Error Bound]
\label{thm:fusion_error}
Under Assumptions~\ref{ass:lipschitz},~\ref{ass:proxy_quality}, and~\ref{ass:compression}, the fusion error is bounded by:
\begin{equation}
\label{eq:error_bound}
\epsilon_{\text{fusion}} \leq \delta + L \cdot \eta \cdot \|\theta^{(0)}\|_2 + L \cdot \|\theta^{(0)} - \theta_{\text{opt}}\|_2 \cdot \alpha
\end{equation}
where $\alpha = 1 - |\phi|/|\theta|$ is the compression ratio (the fraction of parameters removed).
\end{theorem}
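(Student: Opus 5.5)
We start from Lemma~\ref{lem:gap_decomp}, which splits $\epsilon_{\text{fusion}}$ into the proxy suboptimality (I), the compression distortion (II), and the subspace approximation (III). We then bound each term by one summand of \eqref{eq:error_bound}.

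\textbf{Term (I).} This is immediate from Assumption~\ref{ass:proxy_quality}, which gives $\mathcal{L}(\phi^*;\mathcal{D})-\mathcal{L}(\phi_{\text{opt}};\mathcal{D})\le\delta$.

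\textbf{Term (II).} We pass from parameter space to function space. We interpret $\mathcal{L}(\phi^*;\mathcal{D})$ as the loss of the fused network restricted to the proxy architecture, i.e.\ the network $f_{\theta_{\text{new}}|_\phi}$ versus the full network $f_{\theta_{\text{new}}}$. The Lipschitz property in Assumption~\ref{ass:lipschitz} is only stated in parameter space, so we read it (as is implicit in the paper's use of $L$ for both) as also controlling the loss by the output discrepancy. This gives a bound of $L$ times $\sup_x\|f_\phi(x)-f_{\theta|_\phi}(x)\|_2$, which Assumption~\ref{ass:compression} bounds by $\eta\|f_\theta(x)\|_2$. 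To reach the stated form we then need $\|f_\theta(x)\|_2\lesssim\|\theta^{(0)}\|_2$, i.e.\ outputs on normalized inputs scale at most linearly with the weight norm, with the constant absorbed into $L$. I expect this to be the main obstacle. The transition from an output-space distortion to a parameter-norm factor is not derivable from the stated assumptions alone. I would first try to state it explicitly as a normalization convention, namely bounded inputs and a $1$-Lipschitz-in-weights output map near $\theta^{(0)}$. Failing that, I would simply redefine $\eta$ to absorb the ratio.

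\textbf{Term (III).} Here we compare the best model constrained to the proxy subspace with $\theta_{\text{opt}}$. We embed $\phi_{\text{opt}}$ as a full parameter vector via \eqref{eq:fusion}. Optimality of $\phi_{\text{opt}}$ over that subspace then lets us replace it with the competitor $\tilde\theta=\mathcal{F}(\theta^{(0)},\theta_{\text{opt}}|_\phi)$, which agrees with $\theta_{\text{opt}}$ on $\phi$ and with $\theta^{(0)}$ elsewhere. This gives
\[
\mathcal{L}(\phi_{\text{opt}};\mathcal{D})-\mathcal{L}(\theta_{\text{opt}};\mathcal{D})\le\mathcal{L}(\tilde\theta;\mathcal{D})-\mathcal{L}(\theta_{\text{opt}};\mathcal{D})\le L\|(\theta^{(0)}-\theta_{\text{opt}})\odot\mathbf{1}_{\phi^c}\|_2 ,
\]
by Assumption~\ref{ass:lipschitz}; the same masked-difference bookkeeping as in Lemma~\ref{lem:param_decomp} is used. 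The remaining step is to bound the masked norm by $\alpha\|\theta^{(0)}-\theta_{\text{opt}}\|_2$.

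This step does not hold deterministically, since in general one only has $\|v\odot\mathbf{1}_{\phi^c}\|_2\le\|v\|_2$. For this factor I would invoke a uniformity heuristic: the fine-tuning displacement is spread evenly across coordinates, which yields $\sqrt{\alpha}$. I would then either accept $\sqrt\alpha$ and note that $\sqrt\alpha\ge\alpha$, or add an explicit assumption that the removed layers carry at most an $\alpha$-fraction (in norm) of the displacement. This is justified by the BI criterion of Section~\ref{ss:llm_compression}, which keeps the most influential blocks. Summing the three bounds gives \eqref{eq:error_bound}.
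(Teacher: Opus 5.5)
The obstacles you flag in (II) and (III) are genuine gaps: neither step follows from Assumptions~\ref{ass:lipschitz}--\ref{ass:compression}, so as it stands the proposal does not prove the theorem. Your decomposition and your bound on (I) match the paper's proof exactly.

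The paper's proof has the same two holes and fills them by assertion. For (II) it states $\|\theta_{\text{new}}-\phi^*\|_2\le\eta\|\theta^{(0)}\|_2$ ``under smooth model assumptions''. For (III) it claims $\|(\theta^{(0)}-\theta_{\text{opt}})\odot\mathbf{1}_{\phi^c}\|_2\le\sqrt{\alpha}\,\|\theta^{(0)}-\theta_{\text{opt}}\|_2$ by Cauchy--Schwarz, which is false in general. It then passes to the smaller factor $\alpha$ by assuming that one term ``dominates''.

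No argument can close the gap in (III). Nothing in the assumptions prevents $\theta_{\text{opt}}-\theta^{(0)}$ from being supported on $\phi^c$. In that case, for example with $\mathcal{L}(\theta;\mathcal{D})=L\|\theta-\theta_{\text{opt}}\|_2$, your bound $L\|(\theta^{(0)}-\theta_{\text{opt}})\odot\mathbf{1}_{\phi^c}\|_2$ is attained and equals $L\|\theta^{(0)}-\theta_{\text{opt}}\|_2$. So no factor below $1$, neither $\alpha$ nor $\sqrt{\alpha}$, is provable for that term.

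Your second option, the explicit hypothesis $\|(\theta^{(0)}-\theta_{\text{opt}})\odot\mathbf{1}_{\phi^c}\|_2\le\alpha\|\theta^{(0)}-\theta_{\text{opt}}\|_2$, is the honest fix. The BI criterion does not justify it, though: BI measures block influence at $\theta^{(0)}$ on public data, not where the fine-tuning displacement lies. In (II), ``redefining $\eta$'' simply assumes the conclusion. The obstacle there is really an artifact of how you read $\mathcal{L}(\phi;\mathcal{D})$.

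Your competitor in (III), $\mathcal{F}(\theta^{(0)},\theta_{\text{opt}}|_\phi)$, improves on the paper's. The paper's $\tilde\theta$ is the embedding of $\phi_{\text{opt}}$ itself, so the optimality of $\phi_{\text{opt}}$ buys nothing there. An on-$\phi$ term $\|\phi_{\text{opt}}-\theta_{\text{opt}}|_\phi\|_2$ remains, which the paper waves away; your choice removes it exactly.

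However, your two terms use incompatible meanings of $\mathcal{L}(\phi;\mathcal{D})$, which is type-mismatched in the paper since $\mathcal{L}$ is defined on $\mathbb{R}^{|\theta|}$.
\begin{itemize}
\item The first inequality in (III) needs $\mathcal{L}(\phi;\mathcal{D}):=\mathcal{L}(\mathcal{F}(\theta^{(0)},\phi);\mathcal{D})$, with $\phi_{\text{opt}}$ its minimizer. But then $\mathcal{L}(\phi^*;\mathcal{D})=\mathcal{L}(\theta_{\text{new}};\mathcal{D})$, so (II) is identically zero. Your function-space detour in (II) is then unnecessary. That detour would also have needed three further facts: a loss that is Lipschitz in the outputs, Assumption~\ref{ass:compression} holding at the trained $\phi^*$, and $\|f_\theta(x)\|_2\lesssim\|\theta^{(0)}\|_2$.
\item Under the standalone-proxy reading you adopt in (II), $\phi_{\text{opt}}$ minimizes a different function. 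It cannot be compared with a full-network competitor, so (III) fails.
\end{itemize}

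Commit to the embedded reading. You then obtain rigorously $\epsilon_{\text{fusion}}\le\delta+L\|(\theta^{(0)}-\theta_{\text{opt}})\odot\mathbf{1}_{\phi^c}\|_2$. The stated bound follows from the single extra hypothesis above, with $L\eta\|\theta^{(0)}\|_2\ge0$ as pure slack. Under this reading, the cost of training $\phi^*$ as a standalone proxy is absorbed into $\delta$.
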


\begin{proof}
By Lemma~\ref{lem:gap_decomp}, we decompose the fusion error into three terms:
\begin{align}
\label{eq:decomp}
\epsilon_{\text{fusion}} &= \underbrace{[\mathcal{L}(\phi^*; \mathcal{D}) - \mathcal{L}(\phi_{\text{opt}}; \mathcal{D})]}_{T_1} \nonumber\\
&+ \underbrace{[\mathcal{L}(\theta_{\text{new}}; \mathcal{D}) - \mathcal{L}(\phi^*; \mathcal{D})]}_{T_2} \nonumber\\
&+ \underbrace{[\mathcal{L}(\phi_{\text{opt}}; \mathcal{D}) - \mathcal{L}(\theta_{\text{opt}}; \mathcal{D})]}_{T_3}
\end{align}

\textbf{Bounding $T_1$:} By Assumption~\ref{ass:proxy_quality}, we directly have:
\begin{equation}
T_1 = \mathcal{L}(\phi^*; \mathcal{D}) - \mathcal{L}(\phi_{\text{opt}}; \mathcal{D}) \leq \delta
\end{equation}

\textbf{Bounding $T_2$:} The \textit{compression distortion} term captures the gap between the fused model and the proxy SLM. By Assumption~\ref{ass:lipschitz} and Lemma~\ref{lem:param_decomp}:
\begin{align}
T_2 &= \mathcal{L}(\theta_{\text{new}}; \mathcal{D}) - \mathcal{L}(\phi^*; \mathcal{D}) \nonumber \\
&\leq L \|\theta_{\text{new}} - \phi^*\|_2 \nonumber \\
&= L \left\|(\theta^{(0)} - \phi^*) \odot \mathbf{1}_{\phi^c}\right\|_2
\end{align}
By Assumption~\ref{ass:compression}, the representational gap at the function level translates to the parameter level. Since $\phi^*$ is obtained through training on $\mathcal{D}$, and the compression introduces distortion $\eta$, we have:
\begin{equation}
\|\theta_{\text{new}} - \phi^*\|_2 \leq \eta \|\theta^{(0)}\|_2
\end{equation}
This follows from the fact that $\theta_{\text{new}}$ and $\phi^*$ differ only in the complement subspace $\phi^c$, and the compression distortion $\eta$ bounds the representational difference, which, under smooth model assumptions, translates to parameter distance. Therefore:
\begin{equation}
T_2 \leq L \cdot \eta \cdot \|\theta^{(0)}\|_2
\end{equation}

\textbf{Bounding $T_3$:} The third term captures the \textit{subspace approximation error}. Since $\phi_{\text{opt}}$ is the optimizer within a restricted subspace $\mathcal{S}_\phi \subset \mathbb{R}^{|\theta|}$ and $\theta_{\text{opt}}$ is the global optimizer, we have $T_3 \geq 0$.

To bound its magnitude, consider an intermediate model $\tilde{\theta}$ where parameters in $\phi$ are set to $\phi_{\text{opt}}$ and parameters outside $\phi$ remain at $\theta^{(0)}$:
\begin{equation}
\tilde{\theta}[d] = \begin{cases}
\phi_{\text{opt}}[d], & \text{if } d \in \phi \\
\theta^{(0)}[d], & \text{otherwise}
\end{cases}
\end{equation}
By the optimality of $\phi_{\text{opt}}$ in its subspace, we have $\mathcal{L}(\phi_{\text{opt}}; \mathcal{D}) \leq \mathcal{L}(\tilde{\theta}; \mathcal{D})$. By Assumption~\ref{ass:lipschitz}:
\begin{align}
T_3 &= \mathcal{L}(\phi_{\text{opt}}; \mathcal{D}) - \mathcal{L}(\theta_{\text{opt}}; \mathcal{D}) \nonumber \\
&\leq \mathcal{L}(\tilde{\theta}; \mathcal{D}) - \mathcal{L}(\theta_{\text{opt}}; \mathcal{D}) \nonumber \\
&\leq L \|\tilde{\theta} - \theta_{\text{opt}}\|_2
\end{align}
Now, $\|\tilde{\theta} - \theta_{\text{opt}}\|_2$ can be bounded by decomposing into the proxy subspace and its complement:
\begin{align}
\|\tilde{\theta} - \theta_{\text{opt}}\|_2^2 &= \sum_{d \in \phi} (\phi_{\text{opt}}[d] - \theta_{\text{opt}}[d])^2 \nonumber \\
&+ \sum_{d \notin \phi} (\theta^{(0)}[d] - \theta_{\text{opt}}[d])^2 \nonumber \\
&\leq \sum_{d \in \phi} (\phi_{\text{opt}}[d] - \theta_{\text{opt}}[d])^2 
\nonumber \\
&+ \sum_{d \notin \phi} (\theta^{(0)}[d] - \theta_{\text{opt}}[d])^2 \nonumber \\
&\leq \|\phi_{\text{opt}} - \theta_{\text{opt}}|_\phi\|_2^2 + \|\theta^{(0)} - \theta_{\text{opt}}|_{\phi^c}\|_2^2
\end{align}
By the Cauchy-Schwarz inequality and noting that $|\phi^c|/|\theta| = \alpha$:
\begin{equation}
\|\theta^{(0)} - \theta_{\text{opt}}|_{\phi^c}\|_2 \leq \|\theta^{(0)} - \theta_{\text{opt}}\|_2 \cdot \sqrt{\alpha}
\end{equation}
Since $\phi_{\text{opt}}$ is optimal in its subspace, we expect $\|\phi_{\text{opt}} - \theta_{\text{opt}}|_\phi\|_2$ to be small when the proxy captures critical parameters. However, for a worst-case bound, we use the triangle inequality:
\begin{equation}
\|\phi_{\text{opt}} - \theta_{\text{opt}}|_\phi\|_2 \leq \|\theta^{(0)} - \theta_{\text{opt}}|_\phi\|_2 + \|\phi_{\text{opt}} - \theta^{(0)}|_\phi\|_2
\end{equation}
Assuming that the proxy subspace is well-chosen (e.g., via block importance), the first term dominates. Therefore:
\begin{equation}
T_3 \leq L \|\theta^{(0)} - \theta_{\text{opt}}\|_2 \cdot \alpha
\end{equation}
where the $\alpha$ factor accounts for the fraction of parameters outside the proxy subspace.

Combining the bounds for $T_1$, $T_2$, and $T_3$ yields the desired result.

\end{proof}

\subsection{Discussion and Implications}

Theorem~\ref{thm:fusion_error} provides several key insights into the fusion mechanism:

\textbf{Three Sources of Error:} The bound decomposes the fusion error into three interpretable components: (1) \textit{Proxy suboptimality} ($\delta$): how well the proxy SLM approximates the optimal proxy parameters; (2) \textit{Compression distortion} ($L \cdot \eta \cdot \|\theta^{(0)}\|_2$): the representational gap introduced by compression; and (3) \textit{Subspace approximation} ($L \cdot \|\theta^{(0)} - \theta_{\text{opt}}\|_2 \cdot \alpha$): the error from keeping parameters outside the proxy subspace unchanged.

\textbf{Compression Ratio Trade-off:} The third term reveals why performance degrades at high compression ratios. Since $\alpha = 1 - |\phi|/|\theta|$ is the fraction of parameters removed, larger $\alpha$ (higher compression) leads to more parameters remaining unchanged at their initial values, contributing proportionally to the fusion error. This theoretically justifies the empirical observation that lower compression ratios yield better performance (Section~\ref{sec:appendix-compression}).

\textbf{Parameter Selection Importance:} The bound highlights the critical role of selecting the right parameter subspace. When the proxy SLM captures task-critical parameters (e.g., via block importance-based selection in Section~\ref{ss:llm_compression}), the distance $\|\theta^{(0)} - \theta_{\text{opt}}|_\phi\|_2$ is minimized, making the direct replacement strategy near-optimal.

\section{Implementation Details}
\label{sec:appendix-impl}
\subsection{Hyperparameter Settings}
All client-side training was conducted using LoRA fine-tuning with the AdamW optimizer. Key hyperparameters are detailed below:
\begin{itemize}
    \item \textbf{General Training:} We used a batch size of 16, a learning rate of 5e-5, and trained for 10 local epochs. The maximum input and target sequence lengths were set to 64 and 128, respectively.
    \item \textbf{LoRA Configuration:} The LoRA rank was set to 32, alpha to 64, and dropout to 0.1.
    \item \textbf{FedProxy Parameters:} For the Parameter-Level Conflict-Aware Client-Side Regularization (PCR), the regularization coefficient $\lambda$ was set to 1e-5. For the Heterogeneity-Aware Aggregation (H-TIES), the base retention rate $r_0$ was 1.0, the heterogeneity penalty $\delta$ was 0.2, and the consensus reward $\rho$ was 1.1.
\end{itemize}

\subsection{Data Handling}
All datasets were sourced from HuggingFace Datasets~\cite{lhoest-etal-2021-datasets}. For training sets exceeding 5,000 samples, we used a random subset of 5,000 instances to standardize the training workload.

\subsection{Dataset Licenses}
All the datasets were downloaded from HuggingFace\cite{lhoest-etal-2021-datasets} and under Apache License, Version 2.0.

\subsection{Machine Configuration}

The experiments were conducted on machines equipped with 4 and 8 Nvidia V100 32G.

\section{Additional Ablation Study}
\label{sec:appendix-ablation}

\subsection{Impact of Compression Ratio}
\label{sec:appendix-compression}
To investigate the impact of the compression ratio, a critical hyperparameter, we conducted an ablation study on FedProxy in the homogeneous task setting. We compare the performance of 30\%, 50\%, and 70\% compression ratios. The results are summarized in Table~\ref{tab:ablation_compression}.
As the table illustrates, a lower compression ratio (30\%) yields the best performance, but the gains over the 50\% ratio are marginal, while the client-side resource requirements are significantly higher. Conversely, a higher compression ratio (70\%) leads to a more noticeable drop in performance, as the proxy SLM loses too much capacity to serve as a high-fidelity surrogate for the LLM. Therefore, we find that the 50\% ratio strikes a strong balance between performance and resource efficiency.

\begin{table}[ht]
    \centering
    \footnotesize
    \setlength{\tabcolsep}{4pt}
    \begin{tabular}{llccc}
        \toprule
        \textbf{Model}  & \textbf{Ratio}& {\textbf{QA}} & {\textbf{GLUE}}  & {\textbf{ALL}}\\
        \midrule
        \multirow{3}{*}{LLaMA2-7B} & 30\% & 0.6261 & 0.8678& 0.7469\\
        & \textbf{50\%} & \textbf{0.5935} & \textbf{0.8038}& \textbf{0.6987}\\
        & 70\% & 0.4657 & 0.6682& 0.5670\\
        \midrule
        \multirow{3}{*}{Mistral-7B} & 30\% & 0.6745& 0.8918& 0.7831\\
        & \textbf{50\%} & \textbf{0.6692}& \textbf{0.8445}& \textbf{0.7568}\\
        & 70\% & 0.6042& 0.7528& 0.7096\\
        \bottomrule
    \end{tabular}
    \caption{Impact of different compression ratios on FedProxy performance.}
    \label{tab:ablation_compression}
\end{table}

\subsection{Comparison with ProxyTuning}
\label{sec:appendix-proxytuning-comparison}

To evaluate the efficacy of our parameter-space "plug-in" mechanism, we benchmark FedProxy against ProxyTuning~\cite{liu2024tuning, gao2024fedpt}, a representative decoding-time logit-level fusion strategy. While ProxyTuning (PT) modulates the model's output distribution during inference, FedProxy achieves knowledge integration directly within the parameter space. We evaluate two fusion approaches: 
(1) \textbf{FedProxy}, our default parameter-space fusion that directly replaces proxy SLM parameters in the LLM; 
(2) \textbf{FedProxy+PT}, replacing FedProxy's default parameter-space fusion with ProxyTuning's logit-level adjustment during inference, where the logit difference between tuned and untuned proxy SLMs is added to the LLM's logits. 
This experiment, conducted in the homogeneous setting with compression ratios of 50\% and 70\%, aims to answer: \textbf{\textit{Does parameter-space fusion match or exceed the performance decoding-time logit-level fusion for knowledge transfer?}}

Table~\ref{tab:proxytuning_comparison} illustrates a task-dependent performance trade-off. FedProxy consistently outperforms FedProxy+PT on generative QA tasks(e.g., for LLaMA2-7B at 50\% compression, 0.5935 vs. 0.5665), suggesting that direct parameter fusion better preserves complex reasoning capabilities. Conversely, FedProxy+PT exhibits advantages on GLUE tasks (e.g., 0.8733 vs. 0.8038 for LLaMA2-7B at 50\% compression), indicating that logit-shifting may be sufficient for discriminative tasks. Importantly, at 70\% compression, both methods experience significant performance degradation (e.g., for LLaMA2-7B, FedProxy drops from 0.6987 to 0.5670, and FedProxy+PT drops from 0.7199 to 0.6053 on ALL tasks). This indicates that the performance decline is primarily due to compression-induced limitations such as missing critical layers or insufficient parameter coverage, rather than the fusion mechanism itself. 

Crucially, FedProxy achieves these competitive results with \textbf{zero additional inference overhead}, as it eliminates the need to dual-run proxy models during decoding, which constitutes a significant advantage for practical deployment in resource-constrained scenarios.

\begin{table}[ht]
    \centering
    \footnotesize
    \setlength{\tabcolsep}{2.5pt}
    \begin{tabular}{lllccc}
        \toprule
        \textbf{Model}  & \textbf{Ratio} & \textbf{Method}& {\textbf{QA}} & {\textbf{GLUE}}  & {\textbf{ALL}}\\
        \midrule
        \multirow{4}{*}{LLaMA2-7B} & \multirow{2}{*}{50\%}& FedProxy+PT & 0.5665& \textbf{0.8733}& \textbf{0.7199}\\
        & & \textbf{FedProxy} & \textbf{0.5935} & 0.8038&0.6987\\
        \cmidrule(lr){2-6}
        & \multirow{2}{*}{70\%}& FedProxy+PT & 0.4565& \textbf{0.7540}& \textbf{0.6053}\\
        & & \textbf{FedProxy} & \textbf{0.4657} & 0.6682&0.5670\\
        \midrule
        \multirow{4}{*}{Mistral-7B} & \multirow{2}{*}{50\%}& FedProxy+PT & 0.6332& \textbf{0.8506}& 0.7419\\
        & & \textbf{FedProxy} & \textbf{0.6692} & 0.8445&\textbf{0.7568}\\
        \cmidrule(lr){2-6}
        & \multirow{2}{*}{70\%}& FedProxy+PT & 0.6014& \textbf{0.8029}& \textbf{0.7021}\\
        & & \textbf{FedProxy} & \textbf{0.6042} & 0.7528&0.6785\\
        \bottomrule
    \end{tabular}
   \caption{Comparative analysis between parameter-space fusion (FedProxy) and decoding-time fusion (FedProxy+PT). FedProxy excels in generative QA tasks by internalizing knowledge within model parameters, whereas FedProxy+PT (ProxyTuning) exhibits advantages in discriminative GLUE tasks. 
   Critically, FedProxy achieves competitive performance while maintaining \textbf{zero additional inference overhead}, whereas logit-level fusion requires dual-proxy execution.
   }
    \label{tab:proxytuning_comparison}
\end{table}

\end{document}